\patchcmd{\maketitle}{\@copyrightspace}{}{}{}
\newtheorem{theorem}{Theorem}
\newtheorem{lemma}[theorem]{Lemma}
\newtheorem{definition}{Definition}
\newcommand{\ea}{(1+1)~EA\xspace}
\newcommand{\N}{\ensuremath{{\mathbb N}}}
\newcommand{\R}{\ensuremath{{\mathbb R}}}
\newcommand{\pfix}{\ensuremath{p_\mathrm{fix}}}
\newcommand{\const}{11}
\newcommand{\Tpeak}{T_{\mathrm{peak}}}
\newcommand{\psuccess}{p_\mathrm{success}}
\newcommand{\loincrease}{d}
\newcommand{\s}{\ensuremath{\beta \Delta f}}
\newcommand{\df}{\ensuremath{\Delta f}}
\newcommand{\lo}[1]{\text{\sc LO}(#1)\xspace}
\newcommand{\onemax}{\text{\sc OneMax}\xspace}
\newcommand{\cliff}[1]{\text{\sc Cliff}_{#1}\xspace}
\newcommand{\balance}{\text{\sc Balance}\xspace}
\newcommand{\Onemax}{\onemax}
\newcommand{\ones}[1]{\ensuremath |#1|_1}
\newcommand{\zeros}[1]{\ensuremath |#1|_0}
\newcommand{\E}[1]{\text{E}\left(#1\right)}
\newcommand{\Prob}[1]{\mathrm{Pr}\left(#1\right)}
\newcommand{\poly}[1]{\text{poly}\left(#1\right)}
\DeclareMathOperator{\mut}{mut}
\colorlet{shadecolor}{gray!25}
\begin{document}

\title{First Steps Towards a Runtime Comparison of Natural and Artificial Evolution}

\numberofauthors{4}
\author{
	\alignauthor Tiago Paix\~{a}o\\
	\affaddr{IST Austria} \\
	\affaddr{Am Campus 1, 3400, Klosterneuburg} \\
	\alignauthor Jorge P\'{e}rez Heredia\\
%	\affaddr{Department of Computer Science} \\
	\affaddr{University of Sheffield} \\
	\affaddr{Sheffield, S1 4DP, United Kingdom} \\
\and
	\alignauthor Dirk Sudholt\\
%	\affaddr{Department of Computer Science} \\
	\affaddr{University of Sheffield} \\
	\affaddr{Sheffield, S1 4DP, United Kingdom} \\
	\alignauthor Barbora Trubenov\'{a}\\
	\affaddr{IST Austria} \\
	\affaddr{Am Campus 1, 3400, Klosterneuburg} \\
}

\maketitle

\begin{abstract}
	Evolutionary algorithms (EAs) form a popular optimisation paradigm inspired by natural evolution. In recent years the field of evolutionary computation has developed a rigorous analytical theory to analyse their runtime on many illustrative problems.
Here we apply this theory to a simple model of natural evolution. In the Strong Selection Weak Mutation (SSWM) evolutionary regime the time between occurrence of new mutations is much longer than the time it takes for a new beneficial mutation to take over the population. In this situation, the population only contains copies of one genotype and evolution can be modelled as a (1+1)-type process where the probability of accepting a new genotype (improvements or worsenings) depends on the change in fitness.

We present an initial runtime analysis of SSWM, quantifying its performance for various parameters and investigating differences to the \ea. We show that SSWM can have a moderate advantage over the \ea at crossing fitness valleys and study an example where SSWM outperforms the \ea by taking advantage of information on the fitness gradient.
\end{abstract}

\category{F.2.2}{Analysis of Algorithms and Problem Complexity}{Nonnumerical Algorithms and Problems}
\keywords{Runtime analysis, natural evolution, population genetics, theory, strong selection weak mutation regime}

\section{Introduction}
In the last 20 years evolutionary computation has developed a number of algorithmic techniques for the analysis of evolutionary and genetic algorithms. These methods typically focus on runtime, and allow for rigorous bounds on the time required to reach a global optimum, or other well-specified high-fitness solutions.
The runtime analysis of evolutionary algorithms has become one of the dominant concepts in evolutionary computation, leading to a plethora of results for evolutionary algorithms
~\cite{Auger2011,Jansen2013,NeumannWitt2010} as well as novel optimisation paradigms such as swarm intelligence~\cite{NeumannWitt2010} and artificial immune systems~\cite{Jansen2011a}.

Interestingly, although evolutionary algorithms are heavily inspired by natural evolution, these methods have seldom been applied to natural evolution as studied in mathematical population genetics. This is a missed opportunity: the time it takes for a natural population to reach a fitness peak is an important question for the study of natural evolution. The kinds of results obtained from runtime analysis, namely how the runtime scales with genome size and mutation rate, are of general interest to population genetics.
Moreover, recently there has been a renewed interest in applying computer science methods to problems in evolutionary biology with contributions from unlikely fields such as game theory~\cite{chastain_algorithms_2014}, machine learning~\cite{valiant_evolvability_2009} and Markov chain theory~\cite{chatterjee_time_2014}.
Here, we present a first attempt at applying runtime analysis to the so-called Strong Selection Weak Mutation regime of natural populations.

%SSWM
The Strong Selection Weak Mutation model applies when the population size, mutation rate, and selection strength are such that the time between occurrence of new mutations
is long compared to the time a new genotype takes to replace the parent genotype~\cite{gillespie_molecular_1984}.
Under these conditions, only one genotype is present in the population most of the time, and evolution occurs through ``jumps'' between different genotypes, corresponding to a new mutation replacing the resident genotype in the population. The relevant dynamics can then be characterized by a (1+1)-type stochastic process. This model is obtained as a limit of many other models, such as the Wright-Fisher model. One important aspect of this model is that new solutions are accepted with a probability $\frac{1-e^{-2\beta\Delta f}}{1-e^{-2 N\beta \Delta f}}$ that depends on the fitness difference $\Delta f$ between the new mutation and the resident genotype. Here $N$ reflects the size of the underlying population, and $\beta$ represents the selection strength. One can think of $f$ as defining a phenotype that is under selection to be maximized; $\beta$ quantifies how strongly a unit change in $f$ is favoured. This probability was first derived by Kimura~\cite{kimura_probability_1962} for a population of $N$ individuals that are sampled binomially in proportion to their fitness.

This choice of acceptance function introduces two main differences to the \ea: First, solutions of lower fitness (worsenings) may be accepted with some positive probability. This is reminiscent of the Metropolis algorithm (Simulated Annealing with constant temperature) which can also accept worsenings (see, e.\,g.~\cite{Jansen2007}).
Second, solutions of higher fitness can be rejected, since they are accepted with a probability that is roughly proportional to the relative advantage they have over the current solution.

We cast this model of natural evolution in a (1+1)-type algorithm referred to as SSWM, using common mutation operators from evolutionary algorithms. We then present first runtime analyses of this process.
Our aims are manifold:
\begin{itemize}
\item to explore the performance of natural evolution in the context of runtime, comparing it against simple evolutionary algorithms like the \ea,
\item to investigate the non-elitistic selection mechanism implicit to SSWM and its usefulness in the context of evolutionary algorithms, and
\item to show that techniques for the analysis of evolutionary algorithms can be applied to simple models of natural evolution, aiming to open up a new research field at the intersection of evolutionary computation and population genetics.
\end{itemize}

Our results are summarised as follows. For the simple function \onemax we show in Section~\ref{sec:onemax} that with suitably large population sizes, when $N \beta  \ge \frac{1}{2}\ln(11n)$, SSWM is an effective hill climber as it optimises \onemax in expected time $O((n \log n)/\beta)$. However, when the population size is by any constant factor smaller than this threshold, we encounter a phase transition and SSWM requires exponential time even on \onemax.

We then illustrate the particular features of the selection rule in more depth. In Section~\ref{sec:cliff} we consider a function $\cliff{d}$ where a fitness valley of Hamming distance~$d$ needs to be crossed. For $d = \omega(\log n)$ the \ea needs time $\Theta(n^d)$, but SSWM is faster by a factor of $e^{\Omega(d)}$ because of its ability to accept worse solutions. Finally, in Section~\ref{sec:balance} we illustrate on the function \balance~\cite{RohlfshagenLehreYao2009} that SSWM can drastically outperform the \ea because the fitness-dependent selection drives it to follow the steepest gradient. While the \ea needs exponential time in expectation, SSWM with overwhelming probability finds an optimum in polynomial time.

The main technical difficulties are that in contrast to the simple \ea, SSWM is a non-elitist algorithm, hence fitness-level arguments based on elitism are not applicable. Level-based theorems for non-elitist populations~\cite{Corus2014} are not applicable either because they require population sizes larger than~1. Moreover, while for the \ea transition probabilities to better solutions are solely determined by probabilities for flipping bits during mutation, for SSWM these additionally depend on the probability of fixation and hence the absolute fitness difference. The analysis of SSWM is more challenging than the analysis of the \ea, and requires tailored proof techniques. We hope that these techniques will be helpful for analysing other evolutionary algorithms with fitness-based selection schemes.

\section{Preliminaries}

We define the optimisation time of SSWM as the first generation where the optimum is accepted as new individual.

As can be seen from the description above, the model resembles the \ea in that it only maintains one genotype that may be replaced by mutated versions of it. The candidate solutions are accepted with probability
\begin{equation}
\pfix(\Delta f)=\frac{1-e^{-2\beta\Delta f}}{1-e^{-2 N\beta \Delta f}}
\end{equation}
where $\Delta f \neq 0$ is the fitness difference to the current solution and $N\geq 1$ is the size of the underlying population. For $\Delta f = 0$ we define $\pfix(0) := \lim_{\Delta f\rightarrow 0} \pfix(\Delta f)=\frac{1}{N}$, so that $\pfix$ is continuous and well defined for all $\Delta f$.
If $N=1$, this probability will be $p_\text{fix}(s)=1$, meaning that any offspring will be accepted, and if $N\rightarrow\infty$, it will only accept solution for which $\Delta f>0$.  This expression was first derived by Kimura~\cite{kimura_probability_1962} and represents the \emph{probability of fixation}, that is, the probability that a gene that is initially present in one copy in a population of $N$ individuals is eventually present in all individuals.

Since the acceptance function in this algorithm depends on the absolute difference in fitness between genotypes, we include a parameter $\beta\in (0,1]$  that effectively scales the fitness function and that in population genetics models the strength of selection on a phenotype. By incorporating $\beta$ as a parameter of this function (and hence of the algorithm) we avoid having to explicitly rescale the fitness functions we analyse, while allowing us to explore the performance of this algorithm on a family of functions. This function has a sigmoid shape (strictly increasing - see Lemma \ref{lemma:pfix-strictly-increasing}) with limits $\lim_{\Delta f\rightarrow -\infty}\pfix(\Delta f)=0$ and $\lim_{\Delta f\rightarrow \infty}\pfix(\Delta f)=1$. As such, for large $\vert\beta \Delta f\vert $ this probability of acceptance is close to the one in the \ea, as long as $N>1$, defeating the purpose of the comparison. By bounding $\beta$ to $1$, we avoid artefactual results obtained by inflating the fitness differences between genotypes.

We can then cast the SSWM regime as Algorithm~\ref{alg:sswm},  where the function $\textrm{mutate}(x)$ can be either standard bit mutation (all bits are mutated independently with probability $p_m=1/n$, which we call \emph{global mutations}) or flipping a single bit chosen uniformly at random (which we call \emph{local mutations}).
SSWM is valid when the expected number of new mutants in the population is much less than one, which implies that local mutations are a better approximation for this regime.  However, we also consider global mutations in order to facilitate a comparison with evolutionary algorithms such as the \ea (Algorithm~\ref{alg:ea}), which uses global mutations.

\begin{algorithm}[h]
\caption{SSWM}
\label{alg:sswm}
\begin{algorithmic}
	\STATE {Choose $x\in \{0,1\}^n$ uniformly at random }
	\REPEAT
		\STATE {$y\leftarrow \mathrm{mutate}(x) $}
		\STATE {$\Delta f =f(y)-f(x)$ }
		\STATE {Choose $r\in\left[0,1\right]$ uniformly at random}
		\IF {$r<\pfix (\Delta f)$}
			\STATE {$x\leftarrow y$}
		\ENDIF
	\UNTIL {stop}
\end{algorithmic}
\end{algorithm}

\begin{algorithm}[h]
\caption{\ea}
\label{alg:ea}
\begin{algorithmic}
	\STATE {Choose $x\in \{0,1\}^n$ uniformly at random }
	\REPEAT
		\STATE {$y\leftarrow \mathrm{mutate}(x)$}
		\IF {$f(y) \ge f(x)$}
			\STATE {$x\leftarrow y$}
		\ENDIF
%		\STATE {$x\leftarrow \mathrm{argmax}\{f(x),f(y)\}$}
	\UNTIL {stop}
\end{algorithmic}
\end{algorithm}

Next, we derive upper and lower bounds for $\pfix(\Delta f) $
that will be useful throughout the manuscript.

\begin{lemma}
For every $\beta \in \R^+$ and $N \in \N^+$ the following inequalities hold.
If $\Delta f \ge 0$ then
\[
\frac{2\beta \Delta f}{1+2\beta \Delta f} \le  \pfix(\Delta f) \le \frac{2\beta \Delta f}{1-e^{-2N\beta \Delta f}}.
% \le \frac{1}{N}+2\beta\Delta f.
\]
If $\Delta f \le 0$ then
\[
\frac{-2\beta\Delta f }{e^{-2 N \beta \Delta f}}\le  \pfix(\Delta f) \le \frac{e^{-2\beta\Delta f}}{e^{-2N\beta\Delta f}-1}.
\]

\label{lem:p.fix.bounds}
\end{lemma}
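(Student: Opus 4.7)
All four inequalities follow from a single elementary estimate: $e^{x} \ge 1 + x$ for every $x \in \R$ (equivalently $1 - e^{-x} \le x$, and for $x > -1$ also $e^{-x} \le 1/(1+x)$). I would split on the sign of $\Delta f$ and, in each case, manipulate the numerator and denominator of $\pfix(\Delta f) = (1 - e^{-2\beta\Delta f})/(1 - e^{-2N\beta\Delta f})$ separately. It is convenient to abbreviate $a := 2\beta\Delta f$ and $b := 2N\beta\Delta f$, so that $b = Na$ and the sign of $a,b$ coincides with the sign of $\Delta f$.

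\emph{The case $\Delta f \ge 0$.} Both $a$ and $b$ are non-negative. The upper bound is immediate: $1 - e^{-a} \le a$ by the workhorse inequality, so $\pfix(\Delta f) \le a/(1 - e^{-b})$, which is exactly the claim. For the lower bound I would first drop the denominator, observing that $1 - e^{-b} \le 1$ yields $\pfix(\Delta f) \ge 1 - e^{-a}$, and then apply $e^{-a} \le 1/(1+a)$ to conclude $1 - e^{-a} \ge a/(1+a)$, as required.

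\emph{The case $\Delta f \le 0$.} Setting $u := -a \ge 0$ and $v := -b \ge 0$, I would multiply numerator and denominator by $-1$ to rewrite the fraction in the more convenient form $\pfix(\Delta f) = (e^{u} - 1)/(e^{v} - 1)$. The upper bound follows at once from $e^{u} - 1 \le e^{u}$, i.e.\ $\pfix(\Delta f) \le e^{u}/(e^{v}-1)$. For the lower bound $(e^{u}-1)/(e^{v}-1) \ge u/e^{v}$, I would clear denominators (both positive when $v > 0$) and reduce the claim to $e^{u+v} \ge (1+u)\,e^{v} - u$. Multiplying the workhorse inequality $e^{u} \ge 1+u$ through by $e^{v}$ gives $e^{u+v} \ge (1+u)\,e^{v}$, which is already stronger than required since $u \ge 0$.

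The proof is almost mechanical: the only mild care needed is the choice of which inequality to apply where. For the lower bound at $\Delta f \ge 0$ the cleanest route is to discard the denominator \emph{before} applying convexity, whereas at $\Delta f \le 0$ both exponential factors must be kept and the estimate is applied after clearing denominators. I do not foresee any real obstacle; the only thing worth flagging is the boundary point $\Delta f = 0$, where the literal formulas become $0/0$ and the bounds should be read in the limiting sense consistent with the convention $\pfix(0) := 1/N$ fixed just before the lemma.
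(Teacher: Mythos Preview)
Your proof is correct and essentially identical to the paper's: both use only $e^{x}\ge 1+x$ (and its rearrangements $1-e^{-x}\le x$, $e^{-x}\le 1/(1+x)$, $e^{x}-1\le e^{x}$) applied to numerator and denominator separately after the same sign split. The only cosmetic difference is in the lower bound for $\Delta f\le 0$, where the paper bounds the denominator first via $e^{v}-1\le e^{v}$ and then the numerator via $e^{u}-1\ge u$, whereas you clear denominators and invoke $e^{u}\ge 1+u$; both routes are the same inequality in slightly different packaging.
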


\begin{proof}
In the following we frequently use $1 + x \le e^x$ and $1-e^{-x}\leq 1$ for all $x \in \R$ as well as $e^x \le \frac{1}{1-x}$ for $x < 1$.

If $x\geq 0$,

\begin{eqnarray*}
	\pfix(\Delta f) &=& \frac{1-e^{-2\beta\Delta f}}{1-e^{-2N\beta\Delta f}} \geq  1-e^{-2\beta\Delta f}\\
				&\geq&  1-\frac{1}{1+2\beta\Delta f} = \frac{2\beta \Delta f}{1+2\beta \Delta f}
\end{eqnarray*}
as well as
\begin{eqnarray*}
	\pfix(\Delta f) &=& \frac{1-e^{-2\beta\Delta f}}{1-e^{-2N\beta\Delta f}}
				\leq \frac{2\beta \Delta f}{1-e^{-2N\beta\Delta f}}.
\end{eqnarray*}

If $\Delta f \leq 0$,
\begin{eqnarray*}
	\pfix(\Delta f) &=& %\frac{1-e^{-2\beta\Delta f}}{1-e^{-2N\beta\Delta f}}
				 \frac{e^{-2\beta\Delta f}-1}{e^{-2N\beta\Delta f}-1}
				%&\leq&\frac{e^{-2\beta\Delta f}}{e^{-2N\beta\Delta f}-1}.
				\leq\frac{e^{-2\beta\Delta f}}{e^{-2N\beta\Delta f}-1}.
\end{eqnarray*}

Using the fact that $e^{-x}-1\leq e^{-x}$:
\begin{eqnarray*}
	\pfix(\Delta f) &=& %\frac{1-e^{-2\beta\Delta f}}{1-e^{-2N\beta\Delta f}}
	 \frac{e^{-2\beta\Delta f}-1}{e^{-2N\beta\Delta f}-1}
	\geq  \frac{e^{-2\beta\Delta f}-1}{e^{-2N\beta\Delta f}}
%=e^{2 N \beta \Delta f}\left(e^{-2\beta\Delta f}-1\right)\\
 \geq  \frac{-2\beta \Delta f }{e^{-2 N \beta \Delta f}}.
\end{eqnarray*}
\end{proof}

The previous bounds for $\Delta f>0$ show that \pfix \xspace is roughly proportional to the fitness difference between solutions $\beta \Delta f$.

\section{SSWM on OneMax}
\label{sec:onemax}

The function $\onemax(x) := \sum_{i=1}^n x_i$ has been studied extensively in natural computation because of its simplicity. It represents an easy hill climbing task, and it is the easiest function with a unique optimum for all evolutionary algorithms that only use standard bit mutation for variation~\cite{Sudholt2012c}. Showing that SSWM can optimise \onemax efficiently serves as proof of concept that SSWM is a reasonable optimiser. It further sheds light on how to set algorithmic parameters such as the selection strength~$\beta$ and the population size~$N$. To this effect, we first show a polynomial upper bound for the runtime of SSWM on \onemax. We then show that SSWM exhibits a phase transition on its runtime as a function of $N\beta$; changing this parameter by a constant factor leads to exponential runtimes on \onemax.

Another reason why studying \onemax for SSWM makes sense is because not all evolutionary algorithms that use a fitness-dependent selection perform well on \onemax. Oliveto and Witt~\cite{Oliveto2013a} showed that the Simple Genetic Algorithm, which uses fitness-proportional selection, fails badly on \onemax even within exponential time, with a very high probability.

\subsection{Upper Bound for SSWM on OneMax}

We first show the following simple lemma, which gives an upper bound on the probability of increasing or decreasing the number of ones in a search point by~$k$ in one mutation.
\begin{lemma}
\label{lem:mutations-decreasing-ones}
For any positive integer $k > 0$, let $\mut(i, i\pm k)$ for $0 \le i \le n$ be the probability that a global mutation of a search point with $i$ ones creates an offspring with $i\pm k$ ones. Then
\begin{align*}
\mut(i, i+k) & \le \left(\frac{n-i}{n}\right)^k \left(1-\frac{1}{n}\right)^{n-k} \cdot \frac{1.14}{k!} \\
\mut(i, i-k) & \le \left(\frac{i}{n}\right)^k \left(1-\frac{1}{n}\right)^{n-k} \cdot \frac{1.14}{k!}.
%\mut(i, i\pm k) &\le \left(1-\frac{1}{n}\right)^{n-k} \cdot \frac{1.14}{k!} \le \frac{1.14}{k!}.
\end{align*}
\end{lemma}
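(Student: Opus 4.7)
I would decompose $\mut(i, i+k)$ by summing over $\ell \ge 0$, the number of one-bits that flip to zero during a global mutation. Since the offspring must have exactly $k$ more ones than the parent, this forces $\ell + k$ of the $n-i$ zero-bits to flip to one and the remaining $n - 2\ell - k$ bits to stay fixed, giving
\[
\mut(i, i+k) = \sum_{\ell=0}^{\min(i,\,n-i-k)} \binom{n-i}{\ell+k}\binom{i}{\ell}\left(\frac{1}{n}\right)^{\!2\ell+k}\left(1-\frac{1}{n}\right)^{\!n-2\ell-k}.
\]

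Next I would apply the elementary bound $\binom{a}{b} \le a^b/b!$ to both binomial coefficients, rewrite $(1-1/n)^{n-2\ell-k}$ as $(1-1/n)^{n-k}\cdot(n/(n-1))^{2\ell}$, and pull the target prefactor $((n-i)/n)^k (1-1/n)^{n-k}/k!$ out of the sum. The claim then reduces to showing that the residual series
\[
S \;=\; \sum_{\ell \ge 0}\frac{k!}{(\ell+k)!\,\ell!}\left(\frac{i(n-i)}{(n-1)^2}\right)^{\!\ell}
\]
is bounded by $1.14$. Using $i(n-i)\le n^2/4$ bounds the ratio in parentheses by $n^2/(4(n-1)^2)$, which is close to $1/4$ for reasonably large $n$. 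The worst case is $k = 1$, where the $\ell = 0$ term contributes $1$ and the ratio of successive terms is $\frac{x}{(\ell+1)(\ell+2)} \le \tfrac{1}{8}$ for $\ell = 0$ and even smaller thereafter, so the tail is dominated by a fast-decaying series summing to a value just below $1.14$ (numerically $\approx 1.13$).

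The companion inequality on $\mut(i, i-k)$ follows by the same argument after swapping the roles of zero- and one-bits: one sums over the number of zero-to-one flips, the prefactor becomes $(i/n)^k$ in place of $((n-i)/n)^k$, and the residual sum is identical.

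The main obstacle is purely numerical: pinning $S$ below $1.14$ uniformly in $n$, $i$, and $k$. Conceptually it is just a careful tail estimate, but some care is required for small $n$, where $n^2/(4(n-1)^2)$ exceeds $1/4$; in that regime the upper summation index $\min(i,n-i-k)$ truncates the series after very few terms, which can be verified by direct inspection.
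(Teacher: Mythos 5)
Your decomposition and overall strategy coincide with the paper's: sum over the number $\ell$ of 1-bits flipped, factor out $((n-i)/n)^k(1-1/n)^{n-k}/k!$, and bound the residual series by a constant. The gap is in the step you yourself flag as ``purely numerical'', and the remedy you propose does not close it. Bounding both binomial coefficients by $a^b/b!$ leaves you with the ratio $x = i(n-i)/(n-1)^2 \le n^2/(4(n-1)^2)$, which is strictly larger than $1/4$ for every finite $n$. For $k=1$ and $i$ near $n/2$ this is enough to push the residual sum past $1.14$: with $n=10$, $i=5$ you get $x = 25/81 \approx 0.309$ and $S \ge 1 + x/2 \approx 1.154 > 1.14$ already from the $\ell=0$ and $\ell=1$ terms alone, and the series only drops below $1.14$ once $n \gtrsim 30$. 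Truncating the sum at $\min(i,\,n-i-k)$ does not rescue this, because in the worst case $i \approx n/2$ the truncation point is far beyond where the series has effectively converged. So as written your chain of inequalities does not establish the constant $1.14$ (the lemma itself is true, but with essentially no slack: for $n=10$, $i=5$, $k=1$ the exact ratio of $\mut(i,i+1)$ to the prefactor is about $1.13$).

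The fix the paper uses is to keep one exact factor when bounding the larger binomial coefficient, namely $\binom{n-i}{k+\ell} \le \frac{1}{(k+\ell)!}\,(n-i)^k\,(n-i-1)^{\ell}$. This turns the ratio into $i(n-i-1)/(n-1)^2$, which is at most exactly $1/4$ for all $i$ and $n$ (maximised at $i=(n-1)/2$). Combined with $(k+\ell)! \ge k!\,(\ell+1)!$, the residual series becomes $\sum_{\ell\ge 0} 4^{-\ell}/(\ell!\,(\ell+1)!) \approx 1.1303 < 1.14$, uniformly in $n$, $i$ and $k$. Your treatment of $\mut(i,i-k)$ by the symmetry $\mut(i,i-k)=\mut(n-i,n-i+k)$ is fine and matches the paper.
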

The proof is omitted due to space restrictions; it uses arguments from the proof of Lemma~2 in~\cite{Sudholt2012c}. The second inequality follows immediately from the first one due to the symmetry $\mut(i,i-k)=\mut(n-i,n-i+k)$.

Now we introduce the concept of drift and find some bounds for its forward and backward expression.

\begin{definition}
Let $X_t$ be the number of ones in the current search point, for all $1\le i\le n$ the forward and backward drifts are
 \begin{align*}
  \Delta^{+}(i)=\;& E[X_{t+1}-i \mid X_{t}=i, X_{t+1}>i]\cdot P(X_{t+1}>i \mid X_{t}=i)\\
  \Delta^{-}(i)=\;& E[X_{t+1}-i \mid X_{t}=i, X_{t+1}<i]\cdot P(X_{t+1}<i \mid X_{t}=i)\\
 \shortintertext{and the net drift is the expected increase in the number of ones}
 \Delta (i) =\;& \Delta^+(i) + \Delta^-(i).
 \end{align*}
\end{definition}

\begin{lemma}
\label{lem:bounds.drift}
Consider SSWM on \onemax and mutation probability $p_{m}=\frac{1}{n}$. Then for global mutations, the forward and backward drifts can be bounded by
\begin{align*}
 \Delta^{+}(i) \ge\;& \frac{n-i}{n}\left(1-\frac{1}{n}\right)^{n-1} \pfix(1)\\
 |\Delta^{-}(i)| \le\;& 1.14\left( 1-\frac{1}{n}\right)^{n-1}\cdot \left(\pfix(-1) + e\cdot \pfix(-2)\right).
 \shortintertext{For local mutations the relations are as follows}
 \Delta^{+}(i) =\;& \frac{n-i}{n}\cdot \pfix(1)\\
 |\Delta^-(i)| \le\;& \pfix(-1).
 \end{align*}
\label{lem:drift.bounds}
\end{lemma}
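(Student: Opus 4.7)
The plan is to handle the four bounds separately, starting with the two local-mutation cases (which are essentially exact calculations) and then the two global-mutation cases, where the backward-drift bound will be the main technical step.

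For local mutations, a single uniformly chosen bit is flipped. So with probability $(n-i)/n$ one of the $n-i$ zero-bits is chosen, producing an offspring with $i+1$ ones, accepted with probability $\pfix(1)$; since this is the only way to strictly increase the fitness, I get $\Delta^+(i) = \frac{n-i}{n}\pfix(1)$. Symmetrically, a decrease is only possible by flipping one of the $i$ one-bits, so $|\Delta^-(i)| = (i/n)\pfix(-1) \le \pfix(-1)$.

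For the global forward drift, I would simply restrict the sum defining $\Delta^+(i)$ to a single event: flipping exactly one of the $n-i$ zero-bits and leaving all other $n-1$ bits untouched. This event has probability $\frac{n-i}{n}(1-1/n)^{n-1}$ and yields an offspring with $i+1$ ones, accepted with probability $\pfix(1)$. All remaining forward transitions contribute non-negatively, so the claimed lower bound follows.

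The main work is the backward-drift bound for global mutations. I would expand
\[
|\Delta^-(i)| \;=\; \sum_{k=1}^{i} k \cdot \mut(i,i-k) \cdot \pfix(-k)
\]
and split into the $k=1$ term and the tail $k \ge 2$. For $k=1$, Lemma~\ref{lem:mutations-decreasing-ones} together with $i/n \le 1$ gives $\mut(i,i-1) \le 1.14\,(1-1/n)^{n-1}$, producing the $\pfix(-1)$ contribution. For the tail, strict monotonicity of $\pfix$ (Lemma~\ref{lemma:pfix-strictly-increasing}) lets me bound $\pfix(-k) \le \pfix(-2)$ for all $k \ge 2$ and pull $\pfix(-2)$ outside the sum. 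The hard part will be showing that what remains, $\sum_{k \ge 2} k \cdot \mut(i,i-k)$, is at most $1.14\,e\,(1-1/n)^{n-1}$. Substituting Lemma~\ref{lem:mutations-decreasing-ones}, using $k/k! = 1/(k-1)!$, factoring out $(1-1/n)^{n-1}$, and reindexing $j = k-1$ turns this tail into
\[
1.14\,(1-1/n)^{n-1}\cdot \tfrac{i}{n}\bigl(e^{i/(n-1)} - 1\bigr),
\]
and the bound then follows from $i/n \le 1$ together with $e^{i/(n-1)} - 1 \le e - 1 < e$ whenever $i \le n-1$; the boundary case $i=n$ is the optimum of \onemax and can be excluded from the drift analysis. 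Adding the two contributions yields the stated bound.
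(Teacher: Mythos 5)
Your proposal is correct and follows essentially the same route as the paper: lower-bound $\Delta^+(i)$ by the single event of flipping exactly one zero-bit, and upper-bound $|\Delta^-(i)|$ via Lemma~\ref{lem:mutations-decreasing-ones} by splitting off the $k=1$ term and bounding the tail's fixation probabilities by $\pfix(-2)$. Your treatment of the tail sum (keeping the $(i/n)^k$ and $(1-1/n)^{n-k}$ factors and summing to $e^{i/(n-1)}-1 \le e-1$) is in fact slightly more careful than the paper's, which simply uses $\sum_{j\ge 2} 1/(j-1)! = e-1$, but both arrive at the same constant $e$.
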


\begin{proof}
For global mutations firstly we compute the lower bound for the forward drift,
\begin{equation}
\notag \Delta^{+}(i)=\sum_{j=1}^{n-i}\mut(i,i+j)\cdot j \cdot \pfix(j)\label{eq:Drift.pos}
\end{equation}
where $\mut(i,i+j)$ is the probability of mutation increasing the \Onemax value by $j$ and $i$ is the number of ones of the current search point.

\begin{align*}
 \Delta^{+}(i)\ge\;&   \mut(i,i+1)\cdot \pfix(1)\\
	\ge\;& \frac{n-i}{n}\left(1-\frac{1}{n}\right)^{n-1} \pfix(1).
 \end{align*}

Secondly we calculate the upper bound for the backward drift
\begin{align}
|\Delta^{-}(i)|=&\sum_{j=1}^{i}\mut(i,i-j)\cdot j \cdot \pfix(-j)
\notag \intertext{where $j$ is now the number of new zeros. We can upper bound $\mut(i,i-j)$ for the probability of flipping any $j$ bits, which from Lemma~\ref{lem:mutations-decreasing-ones} yields}\notag
\notag \le& \sum_{j=1}^{i}\frac{1.14}{j!}\cdot \left( 1-\frac{1}{n}\right)^{n-1}\cdot j\cdot \pfix(-j).
\notag \intertext{Separating the case $j=1$ and bounding the remaining fixation probabilities by $\pfix(-2)$} %since is a monotonically increasing function}
\notag \le&\; 1.14\left( 1-\frac{1}{n}\right)^{n-1}\pfix(-1) \\
\notag &+ 1.14\left(1-\frac{1}{n}\right)^{n-1} \cdot \sum_{j=2}^{i} \frac{1}{(j-1)!}\cdot \pfix(-2)\\
\notag \le&\; 1.14\left( 1-\frac{1}{n}\right)^{n-1} (\pfix(-1) + e\cdot \pfix(-2)).
\notag \end{align}
Finally, the case for local mutations is straightforward since the probability of a local mutation increasing the number of ones is $\frac{n-i}{n}$ and that of decreasing it is at most $1$.
\end{proof}

The following theorem shows that SSWM is efficient on \onemax for $N\beta \ge \frac{1}{2}\ln (11n)$, since then $\pfix(1)$ starts being greater than $n\cdot \pfix(-1)$ allowing for a positive drift even on the hardest fitness level ($n-1$ ones). The upper bound increases with $1/\beta$; this makes sense as for small values of~$\beta$ we have $\pfix(1) \approx 2\beta$ (cf.\ Lemma~\ref{lem:p.fix.bounds}). In this regime absolute fitness differences are small and improvements are only accepted with a small probability.

\begin{theorem}
\label{the:upper-onemax}
 For $N\beta \ge \frac{1}{2}\ln(11n)$ and $\beta \in (0,1]$, the expected optimisation time of SSWM on \onemax with local or global mutations is $O\left(\frac{n\log n}{\beta} \right)$ for every initial search point.
\end{theorem}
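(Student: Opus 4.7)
The plan is to apply the multiplicative drift theorem to the potential $\phi_t := n - X_t$, where $X_t$ is the number of ones in the current search point. I will use Lemma~\ref{lem:bounds.drift} to lower-bound the forward and upper-bound the backward drift, and Lemma~\ref{lem:p.fix.bounds} together with the hypothesis $N\beta \ge \frac{1}{2}\ln(11n)$, equivalently $e^{2N\beta} \ge 11n$, to turn these into usable estimates for the relevant fixation probabilities. The role of the constant $11$ is precisely to give the forward drift a constant-factor advantage over the backward drift at the hardest fitness level.

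First I will derive three explicit bounds. Since $1 - e^{-2N\beta} \le 1$, one gets $\pfix(1) \ge 1 - e^{-2\beta} = (e^{2\beta}-1)/e^{2\beta}$. From Lemma~\ref{lem:p.fix.bounds}, $\pfix(-1) \le (e^{2\beta}-1)/(e^{2N\beta}-1) \le (e^{2\beta}-1)/(11n-1)$, and analogously $\pfix(-2) \le (e^{4\beta}-1)/((11n)^2 - 1) = O(\beta/n^2)$ for $\beta \in (0,1]$. These combine into the key ratio $\pfix(1)/\pfix(-1) \ge (11n-1)/e^{2\beta}$, which for $\beta \in (0,1]$ is at least $(11n-1)/e^2 \approx 1.49\,n$.

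Plugging these into Lemma~\ref{lem:bounds.drift} yields, for global mutations, $\Delta(i) \ge (1-1/n)^{n-1}\bigl[(n-i)\pfix(1)/n - 1.14(\pfix(-1) + e\pfix(-2))\bigr]$. To establish multiplicative drift $\Delta(i) \ge \delta(n-i)$, it suffices (by taking $n-i=1$ as the tightest case) to show $\pfix(1)/n > 1.14(\pfix(-1) + e\pfix(-2))$. After substituting the estimates above, this reduces to the numerical inequality $1/e^{2\beta} > 1.14/11 + o(1)$, which holds uniformly in $\beta \in (0,1]$ because $1/e^2 \approx 0.135$ comfortably exceeds $1.14/11 \approx 0.104$. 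A short computation using $(1-1/n)^{n-1} \ge 1/e$ and $e^{2\beta} - 1 \ge 2\beta$ then yields $\delta = \Omega(\beta/n)$. The local-mutation case is entirely analogous and slightly simpler: the corresponding inequality reduces to $\pfix(1) > n\,\pfix(-1)$, which follows directly from the ratio bound $(11n-1)/e^2 > n$.

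Finally, applying the multiplicative drift theorem to $\phi_t = n - X_t$ with rate $\delta = \Omega(\beta/n)$ and initial value $\phi_0 \le n$ yields $E[T] \le (\ln n + 1)/\delta = O((n \log n)/\beta)$ from every starting point. The main obstacle is simply tracking constants carefully: the drift margin is tightest simultaneously at $\beta = 1$ (where $e^{2\beta}$ is largest) and at $i = n-1$ (where the forward drift is smallest), and the constant $11$ in the threshold is tuned exactly so that $11/e^2 > 1.14$ in this combined worst case. A secondary subtlety is the two-bit-flip contribution through $\pfix(-2)$, but the stronger bound $e^{4N\beta} \ge (11n)^2$ implied by the hypothesis keeps this term at $O(\beta/n^2)$ and so it enters only as a lower-order perturbation.
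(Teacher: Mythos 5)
Your proposal is correct and follows essentially the same route as the paper: the same bounds $\pfix(1)\ge 1-e^{-2\beta}$, $\pfix(-1)\le (e^{2\beta}-1)/(11n-1)$ and $\pfix(-2)=O(n^{-2})$ from Lemma~\ref{lem:p.fix.bounds}, the same worst-case check at $n-i=1$ showing the constant $11$ beats $1.14\,e^{2\beta}\le 1.14\,e^2$, and the same resulting drift $\Omega(\beta(n-i)/n)$. The only cosmetic difference is that you invoke the multiplicative drift theorem where the paper uses Johannsen's variable drift theorem with $h(z)=\Omega(\beta z/n)$; for a linear $h$ these give the identical $O((n\log n)/\beta)$ bound.
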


\begin{proof}

The fixation probabilities can be bounded as follows
 \begin{align}
  \pfix(1) &= \frac{1-e^{-2\beta}}{1-e^{-2N\beta}} \ge 1-e^{-2\beta} \notag
  \intertext{and for $N\beta \ge \frac{1}{2}\ln (11n)$}  
  \pfix(-1) &= \frac{e^{2\beta}-1}{e^{2N\beta}-1} \le \frac{e^{2\beta}-1}{11n-1}\label{eq:upper-bound-on-pfix-minus-one}\\
  \pfix(-2) &= \frac{e^{4\beta}-1}{e^{4N\beta}-1} \le \frac{e^{4\beta}-1}{(11n)^{2}-1} = O(n^{-2}). \notag
 \end{align}

Using Lemma~\ref{lem:bounds.drift}

\[
  \Delta(i) \ge \frac{1}{e}\left[ \frac{n-i}{n} \cdot (1-e^{-2\beta}) - 1.14\frac{e^{2\beta}-1}{11n-1} - O(n^{-2})\right]
\]

We need a positive net drift even in the last step $(n-i=1)$ 

\begin{align*}
\Delta(n-1) &\ge \frac{1}{e}\left[ \frac{1}{n} \cdot (1-e^{-2\beta}) - 1.14\frac{e^{2\beta}-1}{11n-1} - O(n^{-2})\right]  \\
            &\ge \frac{1}{e}\left[ \frac{1}{11n-1}\left( \frac{11n-1}{n} \cdot (1-e^{-2\beta}) - 1.14(e^{2\beta}-1) \right) \right] \\
            &\ge \frac{1}{e}\left[ \frac{1-e^{-2\beta}}{11n-1}\left( 11 -\frac{1}{n} - 1.14\cdot \frac{e^{2\beta}-1}{1-e^{-2\beta}}  \right) \right]
            \intertext{using the relation $e^x=\frac{e^{x}-1}{1-e^{-x}}$}
            &\ge \frac{1}{e}\left[ \frac{1-e^{-2\beta}}{11n-1}\left( 11 -\frac{1}{n} - 1.14\cdot e^{2\beta} \right) \right] \\
            \intertext{since $\beta \in (0,1]$ then $e^{2\beta}\le e^{2} < 7.5$}
            &\ge \frac{1}{e}\left[ \frac{1-e^{-2\beta}}{11n-1}\left( 2.5 -\frac{1}{n}  \right) \right] \\
            &\ge \frac{1.5}{e}\cdot \frac{1-e^{-2\beta}}{11n-1}
            \intertext{also for $\beta \in (0,1]$ we have $1.5(1-e^{-2\beta})\ge \beta$}
            &\ge \frac{\beta}{e}\cdot \frac{1}{11n-1}
\end{align*}
which is positive for enough large $n$.

Therefore we can lower bound the drift in any point as 

\begin{equation}
      \Delta(i) = \Omega\left(\frac{n-i}{n}\cdot \beta\right) \label{eq:onemax.drift}
\end{equation}

% 
% \begin{align}
% \notag \intertext{since we need a positive net drift even in the last step \mbox{$(n-i=1)$} we lower bound the function $\frac{11(1-e^{-2\beta})-1.14\cdot (e^{2\beta}+1)}{11n-1}$, which is concave for $\beta \in (0,1]$, with a straight line going through the points $\beta=0$ and $1$. Using $\frac{1}{5n}$ as a lower bound for the height of this second point we obtain}
% &\ge \frac{1}{e}\cdot\left(\frac{n-i}{n}\cdot \frac{\beta}{5}\right)-O(n^{-2}) \ge \frac{\beta}{14}\cdot\frac{n-i}{n} \label{eq:onemax.drift}
% \end{align}
% {for sufficiently large values of $n$.}

Now we apply Johannsen's variable drift theorem~\cite{Johannsen2010} to the number of zeros. Using ${h(z) := E(X_t-X_{t+1}\mid X_{t}=z)}$ then
\begin{align*}
E(T \mid X_0) &\leq \dfrac{z_{\min}}{h(z_{\min})}+ \int^{X_0}_{z_{\min}} \dfrac{1}{h(z)} dz
 \intertext{where $z$ is the number of zeros, $X_t$ the current state and $T$ the optimisation time. Introducing $z_{\min}=1$, $X_0=n$ and}
\notag \Delta(i)  &=\Omega\left(\frac{z}{n}\cdot \beta\right) =h(z)
\end{align*}
we obtain an upper bound for the runtime
\begin{align*}
 E(T \mid X_0) &\le \dfrac{1}{h(1)}+ \int^{n}_{1} \dfrac{1}{h(z)} dz = O\left( \frac{n}{\beta} \right) + O\left( \int^{n}_{1}   \frac{n}{\beta z}dz \right) \\
&= O\left( \frac{n}{\beta } (1+\log n)  \right) = O\left(\frac{n\log n}{\beta} \right).
\qedhere
\end{align*}
\end{proof}

\subsection{A Critical Threshold for SSWM on OneMax}

The upper bound from Theorem~\ref{the:upper-onemax} required $N \beta \ge \frac{1}{2} \ln(11n) = \frac{1}{2} \ln(n) + O(1)$. This condition is vital since if $N \beta$ is chosen too small, the runtime of SSWM on \onemax is exponential with very high probability, as we show next.

If $N \beta$ is by a factor of $1-\varepsilon$, for some constant~$\varepsilon > 0$, smaller than $\frac{1}{2} \ln n$, the  optimisation time is exponential in~$n$, with overwhelming probability. SSWM therefore exhibits a phase transition behaviour: changing $N \beta$ by a constant factor makes a difference between polynomial and exponential expected optimisation times on \onemax.
\begin{theorem}
\label{the:lower-onemax}
If $1 \le N\beta \le \frac{1-\varepsilon}{2} \ln n$ for some ${0 < \varepsilon < 1}$, then the optimisation time of SSWM with local or global mutations on \onemax is at least $2^{c n^{\varepsilon/2}}$ with probability ${1-2^{-\Omega(n^{\varepsilon/2})}}$, for some constant $c > 0$.
\end{theorem}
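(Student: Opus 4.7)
The plan is to exhibit a ``repelling'' region just above the optimum in which $Z_t := n - X_t$ (the number of zeros) has a positive drift --- so SSWM is pushed away from $Z_t = 0$ --- and then to invoke a negative drift theorem to conclude that the optimum is reached only exponentially late.

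A direct computation using the formula for $\pfix$ gives
\[
\frac{\pfix(+1)}{\pfix(-1)} \;=\; \frac{(1-e^{-2\beta})(e^{2N\beta}-1)}{(1-e^{-2N\beta})(e^{2\beta}-1)} \;=\; e^{2(N-1)\beta},
\]
so under the hypothesis $N\beta \le \frac{1-\varepsilon}{2}\ln n$ this ratio is at most $e^{-2\beta}\cdot n^{1-\varepsilon} \le n^{1-\varepsilon}$. In other words, a one-bit improvement is accepted at most $n^{1-\varepsilon}$ times more often than a one-bit worsening --- not enough to compensate for the $(n-i)/i$ imbalance between flippable zeros and ones when $i$ is close to $n$.

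Next, using Lemma~\ref{lem:mutations-decreasing-ones} to control multi-bit mutations, a drift calculation mirroring that of Lemma~\ref{lem:drift.bounds}, but taken on $Z_t$ instead of $X_t$, yields
\[
E[Z_{t+1} - Z_t \mid Z_t = z] \;\ge\; \Omega\!\left(\frac{n-z}{n}\,\pfix(-1) \;-\; \frac{z}{n}\,\pfix(+1)\right).
\]
Inserting the above ratio, the right-hand side is strictly positive whenever $z \le c_1 n^{\varepsilon}$ for a sufficiently small constant $c_1 > 0$, and in this range the drift magnitude is $\Omega(\pfix(-1)) = \Omega(\beta\,n^{\varepsilon-1})$. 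Thus on a window of width $\Theta(n^{\varepsilon})$ immediately above the optimum, $Z_t$ drifts \emph{away} from~$0$.

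I would then apply the simplified negative drift theorem of Oliveto and Witt to $Z_t$ on an interval $[a,b]\subseteq[0,c_1 n^{\varepsilon}]$. The two hypotheses to check are (i) uniform positive drift in $[a,b]$, already shown; and (ii) exponentially decaying tails of $|Z_{t+1}-Z_t|$, which are immediate for local mutations and for global mutations follow from $\binom{n}{j}n^{-j} \le 1/j!$, giving $\Prob{|Z_{t+1}-Z_t|\ge j} \le e/j!$. A Chernoff bound on the uniform random initialisation ensures $Z_0 = \Theta(n) \gg b$ with probability $1-e^{-\Omega(n)}$, so the starting condition $Z_0 \ge b$ is met. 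The theorem then delivers that the first $t$ with $Z_t \le a$ --- and \emph{a fortiori} the first hitting time of the optimum --- is at least $2^{c(b-a)}$, with failure probability $2^{-\Omega(b-a)}$.

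The main technical obstacle is that the drift $\Omega(\beta n^{\varepsilon - 1})$ is only polynomially small, not constant in $n$, so the naive choice $b-a = \Theta(n^{\varepsilon})$ in the form of the drift theorem that requires a constant drift is unavailable. The correct trade-off reflects the fact that near the optimum the process behaves like a biased random walk with per-step up/down ratio $q/p = \Theta(z/n^{\varepsilon})$, and a gambler's-ruin computation shows that descending from height $n^{\varepsilon/2}$ to~$0$ has probability at most $2^{-\Omega(n^{\varepsilon/2})}$. Taking $b-a = \Theta(n^{\varepsilon/2})$ in the negative drift theorem (equivalently, applying the refined version robust to polynomially small drift) then gives precisely the claimed bound of $2^{c\,n^{\varepsilon/2}}$ with failure probability $2^{-\Omega(n^{\varepsilon/2})}$.
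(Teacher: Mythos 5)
Your plan is the same as the paper's: establish a drift away from the optimum for the number of zeros on a window of width $\Theta(n^{\varepsilon/2})$ and invoke a negative drift theorem. The preparatory computations are sound, and the identity $\pfix(1)/\pfix(-1)=e^{2(N-1)\beta}\le n^{1-\varepsilon}$ is a clean way to see why the window has width about $n^{\varepsilon}$ at most (the paper gets the same numbers by bounding $\pfix(1)$ and $\pfix(-1)$ separately).

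However, there is a genuine gap at exactly the step you yourself flag as the main obstacle, and your proposed fix does not close it. The drift you establish is only $\Omega(\beta n^{\varepsilon-1})$, so the standard simplified negative drift theorem (constant drift per step) is indeed inapplicable; the paper's resolution is the negative drift theorem \emph{with self-loops} (Theorem~\ref{thm:simplified-drift}), whose hypotheses are different from the ones you verify. In that theorem both conditions are normalised by $1-p_{k,k}$: one needs the drift to be at least a constant times $1-p_{k,k}$ (which holds here because $1-p_{k,k}=\Theta(\beta n^{\varepsilon-1})$ as well), and one needs the jump probabilities to satisfy $p_{k,k\pm d}\le r(1-p_{k,k})/(1+\delta)^d$, i.e.\ exponential decay \emph{relative to the probability of leaving the state}. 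The absolute tail bound you check, $\Prob{|Z_{t+1}-Z_t|\ge j}\le e/j!$, is vacuous for this purpose since $1-p_{k,k}$ is polynomially small; what is actually needed (and what the paper proves) is $p_{k,k+j}\le O\bigl((n^{\varepsilon/2-1})^j\,\beta j\bigr)\le p_{k,k-1}\cdot 2^{-j}$ inside the window, using $p_{k,k-1}=\Omega(\beta n^{\varepsilon-1})$. Your alternative route via a gambler's-ruin computation with ratio $q/p=\Theta(z/n^{\varepsilon})$ gives the right intuition for why $2^{-\Omega(n^{\varepsilon/2})}$ is the correct answer, but as stated it ignores multi-bit jumps and is not a proof; to make it rigorous you would end up re-deriving precisely the relative-decay condition above. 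So the argument is repairable within your own framework, but as written the decisive application of the drift theorem is incomplete: you must either state and verify the self-loop (or otherwise scaling-robust) version's conditions, or carry out the random-walk argument including the multi-bit transition probabilities.
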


The idea behind the proof of Theorem~\ref{the:lower-onemax} is to show that for all search points with at least $n - n^{\varepsilon/2}$ ones, there is a negative drift for the number of ones. This is because for small $N \beta$ the selection pressure is too weak, and worsenings in fitness are more likely than steps where mutation leads the algorithm closer to the optimum.

We then use the negative drift theorem with self-loops presented in Rowe and Sudholt~\cite{Rowe2013} (an extension of the negative drift theorem~\cite{Oliveto2011} to stochastic processes with large self-loop probabilities). It is stated in the following for the sake of completeness.

\begin{theorem}[Negative drift with self-loops~\cite{Rowe2013}]
\label{thm:simplified-drift}
Consider a Markov process $X_0, X_1, \dots$ on $\{0, \dots, m\}$ and suppose there exists integers $a, b$ with $0 < a < b \leq m$ and $\varepsilon > 0$ such that for all $a \leq k \leq b$ the expected drift towards~0 is
\[
E(k - X_{t+1} \mid X_t = k) < -\varepsilon \cdot (1-p_{k,k})
\]
where $p_{k, k}$ is the self-loop probability at state~$k$.
Further assume there exists constants $r, \delta > 0$ (i.\,e. they are independent of $m$) such that for all $k \geq 1$ and all $d \ge 1$
\[
%	\Prob(X_{t+1} = k - d \mid X_t = k) \leq \frac{r}{(1 + \delta)^d}
	p_{k, k - d}, p_{k, k + d} \leq \frac{r (1 - p_{k,k})}{(1 + \delta)^d}.
\]
Let $T$ be the first hitting time of a state at most~$a$, starting from $X_1 \geq b$. Let $\ell = b-a$. Then there is a constant $c > 0$ such that
\[
	\Prob{T \leq 2^{c \ell /r}} = 2^{-\Omega(\ell / r)}.
\]
\end{theorem}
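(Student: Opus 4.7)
The plan is to reduce the statement to the classical negative drift theorem without self-loops (Oliveto and Witt~\cite{Oliveto2011}) by passing to the \emph{embedded chain} that records only those transitions where the state actually changes. Concretely, starting from $X_1 \ge b$, let $S_1 < S_2 < \dots$ be the random times at which $X_{S_j} \ne X_{S_j - 1}$, and set $Z_0 := X_1$, $Z_j := X_{S_j}$ for $j \ge 1$. Since each non-self-loop transition of the original process takes at least one time step, the first hitting time $T_{\mathrm{emb}}$ of $\{0,\dots,a\}$ by $Z$ satisfies $T \ge T_{\mathrm{emb}}$. It therefore suffices to prove the claimed exponential lower tail for $T_{\mathrm{emb}}$.

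Next I would verify that $Z$ satisfies the hypotheses of the classical negative drift theorem with the same constants $\varepsilon$, $r$, $\delta$. Because conditioning on ``leaving state $k$'' amounts to dividing by $1 - p_{k,k}$, the transitions of the embedded chain satisfy $\Prob{Z_j = k \pm d \mid Z_{j-1} = k} = p_{k,k \pm d}/(1 - p_{k,k}) \le r/(1+\delta)^d$ for all $d \ge 1$, and the embedded drift is $E(k - Z_j \mid Z_{j-1} = k) = E(k - X_{t+1} \mid X_t = k)/(1 - p_{k,k}) < -\varepsilon$, both directly from the hypotheses on $X$. The division is legitimate on $\{a \le k \le b\}$, since if $p_{k,k} = 1$ then the chain cannot leave $k$ at all, and the original drift hypothesis becomes trivial at that state. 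Thus $Z$ has a constant negative drift away from $a$ together with constant-rate geometric jump tails, which are exactly the premises of Oliveto and Witt's theorem. Applying it with interval length $\ell = b - a$ yields $\Prob{T_{\mathrm{emb}} \le 2^{c \ell / r}} = 2^{-\Omega(\ell/r)}$ for a suitable constant $c > 0$, and combined with $T \ge T_{\mathrm{emb}}$ this is the claim.

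The main work, and the only real obstacle, is the embedding bookkeeping: every quantitative bound on the original chain carries the factor $(1 - p_{k,k})$, and one must check that this factor cleanly cancels against the normalisation $1/(1 - p_{k,k})$ produced by conditioning on non-loop transitions, so that no dependence on the self-loop probabilities survives in the embedded estimates. Once this cancellation is verified, the statement reduces to a black-box invocation of the self-loop-free negative drift theorem. An alternative would be to prove the bound directly via a Hajek-style exponential potential $g(x) = e^{\eta(b - x)}$ for small $\eta > 0$ and to control $E(g(X_{t+1}) \mid X_t = k)$ by Taylor-expanding $e^{\pm \eta d}$ against the jump-tail sums; there the same $(1 - p_{k,k})$ factor appears uniformly in both the drift and the geometric tail contributions, so it factors out and the argument again collapses to the loop-free case, but the embedding route keeps the combinatorics out of the calculation.
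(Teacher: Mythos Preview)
The paper does not actually prove this theorem: it is quoted verbatim from Rowe and Sudholt~\cite{Rowe2013} and introduced with ``It is stated in the following for the sake of completeness.'' So there is no proof in the paper to compare against.

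That said, your reduction is exactly the right one and matches the spirit in which the paper presents the result, namely as ``an extension of the negative drift theorem~\cite{Oliveto2011} to stochastic processes with large self-loop probabilities.'' Passing to the embedded chain that skips self-loops divides every transition probability and the drift by $1-p_{k,k}$, which is precisely the factor built into both hypotheses, so the embedded chain satisfies the loop-free Oliveto--Witt assumptions with the same constants $\varepsilon, r, \delta$. Since $T \ge T_{\mathrm{emb}}$, the conclusion transfers. This is indeed how the theorem is derived in~\cite{Rowe2013}. One small point worth making explicit in a full write-up: the embedded process is again a (time-homogeneous) Markov chain because the original process is, and the case $p_{k,k}=1$ for some $a\le k\le b$ is vacuous since then the drift hypothesis would force $0<-\,0$. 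Your alternative Hajek-style route via $g(x)=e^{\eta(b-x)}$ also works and is closer to how~\cite{Oliveto2011} proves the base theorem; the embedding just avoids repeating that moment-generating-function calculation.
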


The proof of Theorem~\ref{the:lower-onemax} applies Theorem~\ref{thm:simplified-drift} with respect to the number of zeros on an interval of $[0, n^{\varepsilon/2}]$.
\begin{proof}[Proof of Theorem~\ref{the:lower-onemax}]
We only give a proof for global mutations; the same analysis goes through for local mutations with similar, but simpler calculations.

Let $p_{k, j}$ be the probability that SSWM will make a transition from a search point with $k$ ones to one with $j$ ones. We start by pessimistically estimating transition probabilities and applying the negative drift theorem with regards to pessimistic transition probabilities $p_{k, j}'$ defined later. The drift theorem will be applied, taking the number of zeros as distance function to the optimum. Our notation refers to numbers of ones for simplicity. Throughout the remainder of the proof we assume $k \ge n - n^{\varepsilon/2}$.

From Lemma~\ref{lem:mutations-decreasing-ones} and every $1 \le j \le n-k$ we have
\begin{align}
p_{k, k+j} \le\;& \frac{1.14}{j!}  \cdot \left(\frac{n-k}{n} \right)^{j} \cdot  \left(1 - \frac{1}{n}\right)^{n-j} \cdot \pfix(j)\notag\\
\le\;& \frac{1.14}{j!} \cdot \left(\frac{n-k}{n}\right)^j \cdot  \pfix(j)\label{eq:bound-on-onemax-increase}\\
\le\;& 1.14\cdot \left(n^{\varepsilon/2-1}\right)^j \cdot \pfix(j)\notag.
\end{align}
Cf. Lemma \ref{lem:p.fix.bounds} we estimate $\pfix(j)$ by $\pfix(j) \le \frac{2\beta j}{1-e^{-2N \beta j}}$.
This gives
\[
p_{k, k+j} \le \left(n^{\varepsilon/2-1}\right)^j \cdot \frac{3\beta j}{1-e^{-2N\beta j}} := p_{k, k+j}'.
\]
The expected drift towards the optimum, $\Delta^+(k)$, is then bounded as follows
\begin{align*}
\Delta^+(k) \le\;& \sum_{j=1}^{n-k} j \cdot p_{k, k+j}'\\
\le\;& \sum_{j=1}^{n-k}  j \cdot \left(n^{\varepsilon/2-1}\right)^j \cdot \frac{3\beta j}{1-e^{-2N \beta j}}\\
\le\;& \frac{3\beta}{1-e^{-2N \beta}} \sum_{j=1}^{\infty} j^2 \cdot \left(n^{\varepsilon/2-1}\right)^j.\\
\intertext{Using $\sum_{j=1}^\infty j^2 \cdot x^j = \frac{x(1+x)}{(1-x)^3} \le x (1+ 5x)$ for $0 \le x \le 0.09$ (this holds for large enough~$n$ as $x= n^{\varepsilon/2-1} = o(1)$) as well as $N\beta  \ge 1$}
\le\;& \frac{3\beta}{1-e^{-2}} \cdot n^{\varepsilon/2-1} \cdot \left(1+5n^{\varepsilon/2-1}\right).
\end{align*}
On the other hand,
\begin{align*}
p_{k, k-1} \ge\;& \frac{k}{n} \cdot \left(1 - \frac{1}{n}\right)^{n-1} \cdot \pfix(-1)
\ge\; \frac{n-n^{\varepsilon/2}}{en} \cdot \pfix(-1)\\
=\;& \frac{\pfix(-1)}{e} \cdot \left(1 - n^{\varepsilon/2-1}\right)
\ge\; \frac{1}{e} \cdot \frac{2\beta}{e^{2N \beta}} \cdot \left(1 - n^{\varepsilon/2-1}\right)\\
\intertext{using $e^{2N \beta} \le e^{(1-\varepsilon)\ln n} = n^{1-\varepsilon}$}
\ge\;& \frac{2\beta \cdot n^{\varepsilon}}{en} \cdot \left(1 - n^{\varepsilon/2-1}\right) := p_{k, k-1}'.
\end{align*}
We further define $p_{k, k-j}' := 0$ for $j \ge 2$.
The expected increase in the number of ones at state~$k$, denoted $\Delta'(k)$, with regards to the pessimistic Markov chain defined by $p_{k, j}'$ is hence at most
\begin{align*}
& \Delta'(k)\\
\le & \sum_{j=1}^{n-k} j \cdot p_{k, k+j}' - p_{k, k-1}'\\
\le\;& \frac{3\beta}{1-e^{-2}} \cdot n^{\varepsilon/2-1} \cdot \left(1+5n^{\varepsilon/2-1}\right) - \frac{2\beta \cdot n^{\varepsilon}}{en} \cdot \left(1 - n^{\varepsilon/2-1}\right)\\
=\;& 2\beta \cdot n^{\varepsilon/2-1} \cdot \left( \frac{3}{2}\cdot \frac{1+5n^{\varepsilon/2-1}}{1-e^{-2}} - \frac{n^{\varepsilon/2}}{e} \cdot \left(1 - n^{\varepsilon/2-1}\right)\right)\\
=\;& -\Omega(\beta \cdot n^{\varepsilon-1}).
\end{align*}

Now, the self-loop probability for the pessimistic Markov chain is at least $p_{k, k}' \ge 1 - \sum_{j =1}^{n-k} p_{k, k+j}' - p_{k, k-1}' \ge 1 - \sum_{j =1}^{n-k} j \cdot p_{k, k+j}' - p_{k, k-1}' = 1 - O(\beta n^{\varepsilon-1})$, hence the first condition of the drift theorem is satisfied.

The second condition on exponentially decreasing transition probabilities follows from $p_{k, k-1}' \le 1-p_{k,k}'$, $p_{k, k-j}' = 0$ for $j \ge 2$ and, for all $j \in \N$,
\begin{align*}
p_{k, k+j}' =\;& \left(n^{\varepsilon/2-1}\right)^j \cdot \frac{3\beta j}{1-e^{-2N \beta j}}
\le\; \left(n^{\varepsilon/2-1}\right)^j \cdot \frac{3\beta j}{1-e^{-2}}\\
\intertext{multiplying by $p_{k,k-1}'/p_{k,k-1}'$}
=\;& p_{k, k-1}' \cdot \frac{\left(n^{\varepsilon/2-1}\right)^j \cdot \frac{3\beta j}{1-e^{-2}}}{\frac{2\beta \cdot n^{\varepsilon}}{en} \cdot \left(1 - n^{\varepsilon/2-1}\right)}\\
=\;& p_{k, k-1}' \cdot n^{-\varepsilon/2} \cdot \left(n^{\varepsilon/2-1}\right)^{j-1} \cdot \frac{e}{1-n^{\varepsilon/2-1}} \cdot \frac{3}{2}\cdot \frac{j}{1-e^{-2}}\\
\le\;& p_{k, k-1}' \cdot 2^{-j}
\le\; (1-p_{k, k}') \cdot 2^{-j}
\end{align*}
where the penultimate inequality holds for large enough~$n$. This proves the second condition for $\delta := 1$ and $r := 2$. Now the negative drift theorem, applied to the number of zeros, proves the claimed result.
\end{proof}

\section{On Traversing Fitness Valleys}
%\section{SSWM and (1+1) EA on Cliff}
\label{sec:cliff}

We have shown that with the right parameters, SSWM is an efficient hill climber. On the other hand, in contrast to the \ea, SSWM can accept worse solutions with a probability that depends on the magnitude of the fitness decrease. This is reminiscent of the Metropolis algorithm---although the latter accepts every improvement with probability~1, whereas SSWM may reject improvements.

Jansen and Wegener~\cite{Jansen2007} compared the ability of the \ea and a Metropolis algorithm in crossing fitness valleys and found that both showed similar performance on \emph{smooth integer} functions: functions where two Hamming neighbours have a fitness difference of at most~1~\cite[Section~6]{Jansen2007}.

We consider a similar function, generalising a construction by J{\"a}gersk{\"u}pper and Storch~\cite{Jagerskupper2007a}: the function $\cliff{d}$ is defined such that non-elitist algorithms have a chance to jump down a ``cliff'' of height roughly~$d$ and to traverse a fitness valley of Hamming distance~$d$ to the optimum (see Figure \ref{fig-cliff}). 

\usetikzlibrary{calc}
\usetikzlibrary{mindmap}
\usetikzlibrary{arrows}
\usetikzlibrary{shadows}
\usetikzlibrary{plotmarks}
\usetikzlibrary{backgrounds}
\usetikzlibrary{shapes}
\usetikzlibrary{shapes.symbols}

%\begin{tikzpicture}
%
%\end{tikzpicture}

\begin{figure}[H]
\begin{center}
\begin{tikzpicture}[ domain=0:30,xscale=0.23,yscale=0.15, scale=1.2, every
shadow/.style={shadow xshift=0.0mm, shadow yshift=0.4mm}]
\tikzstyle{helpline}=[black,thick];
\tikzstyle{function}=[blue, thick];
\draw[gray!50,line width=0.2pt,xstep=1,ystep=1] (0,0) grid (20.5,15.5);
\draw[function] (0,0) -- (12,12) -- (13,7.5) -- (18,12.5);
\draw[helpline, thin, -triangle 45] (0,0) -- (0,15.5);
\draw[helpline, thin, -triangle 45] (0,0) -- (20.5,0) node[right] {$\left| x\right|_1$}; %{\small \#{}ones};
\draw[helpline] (0,0) -- (0,15.5);
\draw[helpline] (0,0) -- (20.5,0);

\draw[helpline] (0,0) -- (-1,0) node[left] {$0$};
\draw[helpline] (0,0) -- (0,-1) node[below] {$0$};
\draw[helpline] (0,15) -- (0,15) node[above=5pt] {fitness};
\draw[helpline] (12,0) -- (12,-1) node[below] {$n-d$};
\draw[helpline] (18,0) -- (18,-1) node[below] {$n$};

 \foreach \u in {0,...,12} {
       \filldraw (\u, \u) circle (3.8pt);
   }

 \foreach \u in {13,...,18} {
       \filldraw (\u, \u - 5.5) circle (3.8pt);
   }

\end{tikzpicture}
\end{center}
\vskip -2em
\caption{Sketch of the function $\cliff{d}$.}  \label{fig-cliff}
\label{fig:cliff}
\end{figure}
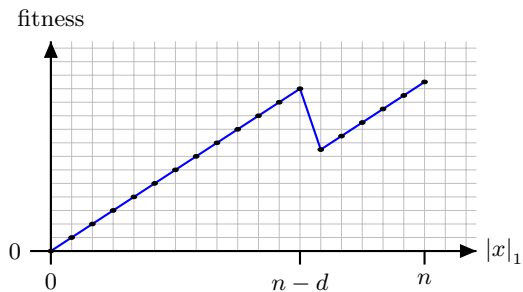
 \vskip -2em

\begin{definition}[Cliff]\label{def-cliff}
\[
\cliff{d}(x) = \begin{cases}  \ones{x}  &\mbox{if } \ones{x} \leq n-d \\
\ones{x}-d+\frac{1}{2}  & \mbox{otherwise}
\end{cases}
\]
where $\ones{x} = \sum_{i=1}^n x_i$   counts the number of ones.
\end{definition}

The \ea typically optimises $\cliff{d}$ through a direct jump from the top of the cliff to the optimum, which takes expected time $\Theta(n^d)$.
\begin{theorem}
\label{the:ea-on-cliff}
The expected optimisation time of the \ea on $\cliff{d}$, for $2 \le d \le n/2$, is $\Theta(n^d)$.
\end{theorem}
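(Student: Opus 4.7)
My plan is to prove the two directions separately. The upper bound $O(n^d)$ follows from the fitness-level method; the lower bound $\Omega(n^d)$ follows from showing that with constant probability the \ea gets stuck at the edge of the cliff.

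For the upper bound, I will partition the search space by fitness and sum $1/s_i$ over levels $i$, where $s_i$ lower-bounds the probability of leaving level $i$ in one step; elitism of the \ea justifies this standard bound. The levels split into three groups: the left-side levels $\ones{x} = i$ for $0 \le i \le n-d-1$, the cliff edge $\ones{x} = n-d$, and the cliff-side levels $\ones{x} = n-d+j$ for $1 \le j \le d-1$. On the left side, flipping any single zero-bit (and no ones) improves fitness, giving $s_i = \Omega((n-i)/n)$ and a total contribution of $O(n \log n)$. On the cliff side, a single bit flip that increases $\ones{x}$ suffices to raise fitness, contributing $O(n \log d)$. The cliff edge is the bottleneck: since every cliff-side state has fitness strictly less than $n-d$, the only accepted improvement is a direct jump to the optimum, flipping exactly the $d$ missing bits, with probability $\Theta(n^{-d})$. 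Summing these contributions yields $O(n \log n + n \log d + n^d) = O(n^d)$ for $d \ge 2$.

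For the lower bound, I plan a two-step argument. First, I will show that with probability $\Omega(1)$ the \ea visits state $\ones{x} = n-d$ before reaching the optimum. Second, once at this state, the only accepted fitness-improving mutation jumps directly to the optimum with probability $\Theta(n^{-d})$ per step; standard tail bounds on the geometric distribution then ensure that with constant probability the \ea remains at the cliff edge for $\Omega(n^d)$ steps. Multiplying these two constant probabilities gives $\E{T} = \Omega(n^d)$.

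The main obstacle is the first step: ruling out that the \ea ``bypasses'' the cliff edge via a single long mutation from some left-side state $\ones{x} = k < n-d$ into a cliff-side state $\ones{y} > n-d$. Any such bypass must satisfy $\ones{y} \ge \max(k+d,\,n-d+1)$, because the acceptance rule $f(y) \ge k$ combined with $f(y) = \ones{y} - d + 1/2$ forces $\ones{y} \ge k + d$. This requires the mutation to flip at least that many specific zero-bits, which has probability $O(n^{-d})$ per mutation when $k$ is near the cliff edge and is much smaller further away. Weighting this by the expected $O(n/(n-k))$ number of mutations the \ea spends at level $k$ and summing over $k$, a short calculation bounds the expected number of bypass events by $O(d/n^{d-1}) = o(1)$ for $d \ge 2$. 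A Chernoff bound on $\ones{x_0}$ guarantees that the run starts strictly below the cliff with probability $1 - o(1)$, and Markov's inequality on the bypass count then yields the first step.
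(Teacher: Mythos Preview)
Your proposal is correct and the overall architecture matches the paper's: an $O(n^d)$ upper bound via a fitness-level decomposition with the cliff edge as the sole bottleneck, and an $\Omega(n^d)$ lower bound via showing that the peak $\ones{x}=n-d$ is reached with constant probability, after which only the direct $d$-bit jump is accepted.

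The genuine difference lies in how you establish that the peak is hit with probability $\Omega(1)$. The paper proves a reusable combinatorial lemma (their Lemma~\ref{lem:conditional-mut}): for all $0\le i<j\le n$, conditional on mutation producing at least $j$ ones, it produces exactly $j$ with probability at least $1/2$. Applied with $j=n-d$, this immediately gives that the first accepted point with $\ge n-d$ ones is the peak with probability $\ge 1/2$, uniformly over whatever left-side state the \ea happened to be in. Your route instead bounds the \emph{expected number} of bypass acceptances over the whole left-side climb (via a Wald-type product of expected time at level $k$ times bypass probability) and applies Markov. Both work; the paper's lemma is slicker, yields an explicit constant, and is reused later in the SSWM analysis of $\cliff{d}$, whereas your argument is more elementary but requires carrying the sum over~$k$.

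Two minor points to tighten in your write-up. First, the bypass sum is not $O(d/n^{d-1})$ in general: for levels $k$ with $n-k$ close to $2d$ the binomial $\binom{n-k}{d}$ contributes a $\binom{2d}{d}$-type factor, so the correct bound is $O\!\left(\frac{n}{d^{3/2}}(4/n)^{d}\right)$; this is still $o(1)$ for all $2\le d\le n/2$, so your conclusion stands. Second, the Chernoff claim that initialisation lands strictly below the cliff with probability $1-o(1)$ fails at the boundary $d=n/2$ (where $n-d=n/2$); there you only get probability $\ge 1/2-o(1)$, which is still $\Omega(1)$ and suffices. The paper simply uses the $1/2$ bound directly.
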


In order to prove Theorem~\ref{the:ea-on-cliff}, the following lemma will be useful for showing that the top of the cliff is reached with good probability. More generally, it shows that the conditional probability of increasing the number of ones in a search point to~$j$, given it is increased to some value of~$j$ or higher, is at least $1/2$.
\begin{lemma}
\label{lem:conditional-mut}
For all $0 \le i < j \le n$,
\[
\frac{\mut(i, j)}{\sum_{k=j}^n \mut(i, k)} \ge \frac{1}{2}.
\]
\end{lemma}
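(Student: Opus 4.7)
My plan is to recast the inequality as $\mut(i, j) \ge \sum_{k = j+1}^{n} \mut(i, k)$ and to bound each $\mut(i, j+d)$ as a rapidly shrinking fraction of $\mut(i, j)$, so that the tail sum is controlled by a convergent series with total mass less than $\mut(i,j)$ itself.

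First I would write
\[
\mut(i, j) = \sum_{b=0}^{i} \binom{n-i}{j-i+b}\binom{i}{b}\left(\frac{1}{n}\right)^{j-i+2b}\left(1-\frac{1}{n}\right)^{n-j+i-2b},
\]
where $b$ counts the one-bits flipped to zero (so $j-i+b$ zero-bits are flipped to one). Dividing the $b$-th summand of $\mut(i, j+1)$ by the $b$-th summand of $\mut(i, j)$ simplifies to $\frac{n-j-b}{(j-i+1+b)(n-1)}$, which is monotonically decreasing in $b$ and hence maximised at $b=0$, where it is bounded by $\frac{n-j}{(j-i+1)(n-1)} \le \frac{1}{j-i+1}$ because $j \ge i + 1 \ge 1$. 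Since this bound is uniform in $b$, summing over $b$ gives the clean inequality $\mut(i, j+1) \le \mut(i, j)/(j-i+1)$.

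Iterating this $d$ times yields $\mut(i, j+d) \le \mut(i, j) \cdot (j-i)!/(j-i+d)!$. Since $i < j$ gives $j - i \ge 1$, a quick check shows $(j-i)!/(j-i+d)! \le 1/(d+1)!$, so
\[
\sum_{k = j+1}^{n} \mut(i, k) \le \mut(i, j) \sum_{d=1}^{\infty} \frac{1}{(d+1)!} = (e-2)\, \mut(i, j),
\]
which gives $\mut(i,j)/\sum_{k \ge j} \mut(i, k) \ge 1/(e-1) > 1/2$, slightly stronger than the stated bound. The only genuine technical step is the term-by-term ratio comparison, and the main thing to watch is that the ratio is maximised at $b=0$ so that the uniform bound $1/(j-i+1)$ applies; once that is in hand the iteration and the bound $e - 2 < 1$ finish the argument without any drift or coupling machinery.
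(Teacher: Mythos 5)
Your proof is correct and follows essentially the same route as the paper's: decompose $\mut(i,k)$ by the number $b$ of 1-bits flipped, compare the $b$-th summands of $\mut(i,j+1)$ and $\mut(i,j)$ termwise, and sum the resulting series over jump lengths. The only difference is that you keep the sharper per-step ratio $1/(j-i+1)$ instead of the paper's uniform factor $1/2$, which yields the marginally stronger constant $1/(e-1)$ in place of $1/2$.
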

The proof of this lemma is presented in the appendix.

\begin{proof}[Proof of Theorem~\ref{the:ea-on-cliff}]
From any search point with $i < n-d$ ones, the probability of reaching a search point with higher fitness is at least $\frac{n-i}{en}$. The expected time for accepting a search point with at least $n-d$ ones is at most $\sum_{i=0}^{n-d-1} \frac{en}{n-i} = O(n \log n)$. Note that this is $O(n^d)$ since $d \ge 2$.

We claim that with probability $\Omega(1)$, the first such search point has $n-d$ ones: with probability at least $1/2$ the initial search point will have at most $n-d$ ones. Invoking Lemma~\ref{lem:conditional-mut} with $j := n-d$, with probability at least $1/2$ the top of the cliff is reached before any other search point with at least $n-d$ ones.

Once on the top of the cliff the algorithm has to jump directly to the optimum to overcome it.  The probability of such a jump is
$\frac{1}{n^d} (1-\frac{1}{n})^{n-d}$ and therefore the expected time to make this jump is $\Theta(n^d)$.
\end{proof}

SSWM with global mutations also has an opportunity to make a direct jump to the optimum. However, compared to the \ea its performance slightly improves when considering shorter jumps and accepting a search point of inferior fitness.
The following theorem shows that for large enough cliffs, $d = \omega(\log n)$, the expected optimisation time is by a factor of $e^{\Omega(d)}$ smaller than that of the \ea. Although both algorithms need a long time for large~$d$, the speedup of SSWM is significant for large~$d$.
\begin{theorem}
The expected optimisation time of SSWM with global mutations and $\beta=1, N = \frac{1}{2}\ln(\const n)$ on $\cliff{d}$ with $d = \omega(\log n)$ is at most~$n^{d}/e^{\Omega(d)}$.
\end{theorem}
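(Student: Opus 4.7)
The strategy is to exploit SSWM's willingness to accept worsenings by jumping from the top of the cliff to a well-chosen intermediate point in the valley, then climbing up from there, rather than executing a direct $d$-bit jump to the optimum (as the \ea must). From the top of the cliff (a state with $n-d$ ones), consider a mutation that flips exactly $k$ of the $d$ zero-bits for some $1\le k\le d-1$, landing at a search point with $n-d+k$ ones and incurring a fitness drop $\Delta f = k-d+1/2$. The mutation probability is at least $\binom{d}{k}\,n^{-k}(1-1/n)^{n-k} \ge \binom{d}{k}\,n^{-k}/e$, and by Lemma~\ref{lem:p.fix.bounds} the acceptance probability of this worsening is at least $2(d-k-1/2)/(11n)^{d-k-1/2}$. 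Multiplying, the factors of $n$ collapse so that the per-step probability of a successful jump of size $k$ from the top of the cliff satisfies
\[
p_k \;\ge\; \frac{2(d-k-1/2)}{e\cdot 11^{d-k-1/2}}\cdot \binom{d}{k}\cdot n^{-(d-1/2)}.
\]

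Next I optimise over $k$. Since $\sum_{k=0}^{d}\binom{d}{k}\,11^{k}=12^{d}$, some $k^{*}$ attains $\binom{d}{k^{*}}\,11^{k^{*}}\ge 12^{d}/(d+1)$; concentration of the binomial weights places this $k^*$ near $11d/12$, so $d-k^{*}=\Omega(d)$. Plugging in yields
\[
p_{k^{*}} \;\ge\; \Omega(1)\cdot (12/11)^{d}\cdot n^{-(d-1/2)} \;=\; \frac{e^{\Omega(d)}}{n^{d-1/2}},
\]
where the factor $(12/11)^{d}=e^{d\,\ln(12/11)}$ is the source of the $e^{\Omega(d)}$ speed-up over the \ea bound of $\Theta(n^{d})$.

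The final bound follows by chaining three phases. \emph{Phase 1.} Below the cliff, $\cliff{d}$ agrees with \onemax, so by Theorem~\ref{the:upper-onemax} the algorithm reaches some state with $\ge n-d$ ones in expected time $O(n\log n)$. \emph{Phase 2.} From the top of the cliff, the expected waiting time for a successful jump of size $k^{*}$ is $1/p_{k^{*}}\le n^{d-1/2}/e^{\Omega(d)}$. \emph{Phase 3.} Once the algorithm lands at $n-d+k^{*}$, I argue it reaches the optimum before slipping back past $n-d$ with probability $\Omega(1)$, in $O(n\log n)$ further steps. This rests on three properties of any state $n-d+j$ with $j\ge k^{*}$: a single-bit improvement is accepted with probability $\pfix(1)=\Theta(1)$, any single-bit worsening is accepted with probability only $\pfix(-1)=O(1/n)$, and a mutation flipping at least $k^{*}$ ones (the smallest jump returning to the top of the cliff or below) has probability at most $1/k^{*}!=e^{-\Omega(d)}$. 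A gambler's-ruin argument on the cliff region then yields the claimed constant escape probability, and a variable-drift bound on the upward climb of length $\le d-k^{*}$ yields the $O(n\log n)$ escape time. Combining the three phases via a geometric-restart argument and using $d=\omega(\log n)$ to absorb the polynomial overheads $n\log n$ and $d+1$ into $e^{\Omega(d)}$, we obtain $n^{d-1/2}/e^{\Omega(d)}\le n^{d}/e^{\Omega(d)}$.

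I expect Phase~3 to be the main obstacle: bounding not merely the first-step slip-back probability but the cumulative probability of slipping back during the entire $O(n\log n)$-step climb requires controlling both slow single-bit descents (each accepted with probability only $O(1/n)$ but occurring often) and rare multi-bit mutations (each requiring $\ge k^{*}=\Omega(d)$ bit flips, which is exponentially improbable). The other subtle point is the combinatorial maximisation: the identity $\sum_k\binom{d}{k}\,11^{k}=12^{d}$ pins the best landing spot a constant fraction of $d$ below~$n$, rather than at $k=1$ or $k=d$, which is precisely what produces the $e^{\Omega(d)}$ advantage over the \ea.
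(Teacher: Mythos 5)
Your proposal follows essentially the same route as the paper: reach the peak by \onemax-style hill climbing, accept a multi-bit jump partway down the cliff, climb from there to the optimum, and wrap everything in a geometric restart (the paper's ``trial'' recurrence $\E{\Tpeak}\le R/\psuccess$). The one genuinely different ingredient is how the jump length is chosen. The paper fixes $k=\lfloor d/e\rfloor$, bounds $\binom{d}{k}\ge(d/k)^k$, and keeps the $e^{2(d-k-1/2)}-1$ numerator of $\pfix$, arriving at a $(10/9)^d$ gain; you instead use the weaker bound $\pfix(\Delta f)\ge -2\beta\Delta f/e^{-2N\beta\Delta f}$ from Lemma~\ref{lem:p.fix.bounds} and select $k^*$ by averaging against $\sum_k\binom{d}{k}\,11^k=12^d$, arriving at $(12/11)^d$. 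Both are $e^{\Omega(d)}$; your averaging trick is arguably cleaner but yields a slightly worse constant (and neither choice is optimal --- maximising $\binom{d}{k}(e^2/11)^{d-k}$ would give roughly $(1+e^2/11)^d$). Two places where the paper does more work than your sketch: first, Phase~3, which you rightly flag as the crux, is handled via the negative drift theorem with self-loops (Theorem~\ref{thm:simplified-drift}) on the window $[n-d+k/2,\,n-d+k]$; the three properties you list are exactly what feed its two conditions, but a plain gambler's-ruin argument would not suffice because of multi-bit steps in both directions, and the conclusion is in fact success probability $1-n^{-\omega(1)}$ rather than just $\Omega(1)$ (though $\Omega(1)$ would do). Second, your Phase~1 tacitly assumes the region $\{\ones{x}\ge n-d\}$ is first entered exactly at the peak; the paper uses Lemma~\ref{lem:conditional-mut} to show each crossing lands on a peak with conditional probability at least $1/2$, so the algorithm oscillates only $O(1)$ times in expectation. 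This only affects $R$ by a polynomial factor, which $e^{\Omega(d)}$ with $d=\omega(\log n)$ absorbs, so your argument goes through.
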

\begin{proof}
We define $R$ as the expected time for reaching a search point with either $n-d$ or $n$ ones, when starting with a worst possible non-optimal search point. Let $\Tpeak$ be the random optimisation time when starting with any search point of $n-d$ ones, hereinafter called a \emph{peak}. Then the expected optimisation time from any initial point is at most $R + \E{\Tpeak}$. Let $\psuccess$ be the probability of SSWM starting in a peak will reach the optimum before reaching a peak again. We call such a time period a \emph{trial}. After the end of a trial, taking at most $R$ expected generations, with probability $1-\psuccess$ SSWM returns to a peak again, so
\begin{align}
& \E{\Tpeak} \le R + (1-\psuccess) \cdot \E{\Tpeak}\notag\\
\Leftrightarrow\; & \E{\Tpeak} \le \frac{R}{\psuccess}.\label{eq:Tpeak-recurrence}
\end{align}
We first bound the worst-case time to return to a peak or a global optimum as $R = O(n \log n)$. Let $S_1$ be the set of all search points with at most $n-d$ ones and ${S_2 := \{0, 1\}^n \setminus S_1}$. As long as the current search point remains within $S_2$, SSWM essentially behaves like on \Onemax. Repeating arguments from the proof of Theorem~\ref{the:upper-onemax}, in expected time $O((n \log n)/\beta) = O(n \log n)$ (as here $\beta=1$) SSWM either finds a global optimum or a search point in~$S_1$. Likewise, as long as the current search point remains within $S_1$, SSWM essentially behaves like on \Onemax and within expected time $O(n \log n)$ either a peak or a search point in $S_2$ is found.

SSWM can switch indefinitely between $S_1$ and $S_2$ within one trial, as long as no optimum or peak is reached. The conditional probability of creating a peak---when from a search point with $i < n-d$ ones either a peak or a non-optimal search point in $S_2$ is reached---is
\begin{align*}
\frac{\mut(i, n-d) \cdot \pfix(n-d-i)}{\sum_{k=n-d}^{n-1} \mut(i, k) \cdot \pfix(k-i-d+1/2)} \ge\;& \frac{\mut(i, j)}{\sum_{k=j}^n \mut(i, k)}
%=\;& \Omega(1)
\end{align*}
as $\pfix(n-d-i) \ge \pfix(k-i-d+1/2)$ for all $n-d < k < n$. By Lemma~\ref{lem:conditional-mut}, the above fraction is at least~$1/2$.
Hence SSWM in expectation only makes $O(1)$ transitions from $S_1$ to $S_2$, and the overall expected time spent in $S_1$ and $S_2$ is at most $R = O(1) \cdot O(n \log n)$.

The remainder of the proof now shows a lower bound on $\psuccess$, the probability of a trial being successful. A sufficient condition for a successful trial is that the next mutation creates a search point with $n-d+k$ ones, for some integer $1 \le k \le d$, that this point is accepted, and that from there the global optimum is reached before returning to a peak.

We estimate the probabilities for these events separately in order to get an overall lower bound on the probability of a trial being successful.

From any peak there are $\binom{d}{k}$ search points at Hamming distance~$k$ that have $n-d+k$ ones. Considering only such mutations, the probability of a mutation increasing the number of ones from $n-d$ by~$k$ is at least
\begin{align*}
\mut(n-d, n-d+k) \ge\;& \frac{1}{n^k} \cdot \left(1 - \frac{1}{n}\right)^{n-1} \cdot \binom{d}{k}\\
\ge\;& \frac{1}{en^k} \cdot \left(\frac{d}{k}\right)^k.
\end{align*}
The probability of accepting such a move is
\[
\pfix(k-d+1/2) = \frac{e^{2\beta(d-k-1/2)}-1}{e^{2N \beta(d-k-1/2)}-1} \ge
\frac{e^{2(d-k-1/2)}-1}{(\const n)^{(d-k-1/2)}}.
\]
We now fix $k := \lfloor d/e\rfloor$ and estimate the probability of making and accepting a jump of length~$k$:
\begin{align*}
& \mut(n-d, n-d+k) \cdot \pfix(k-d+1/2)\\
\;&\ge
\frac{1}{en^k} \cdot \left(\frac{d}{k}\right)^k \cdot
\frac{e^{2(d-k-1/2)}-1}{(\const n)^{(d-k-1/2)}}\\
\;&=
\Omega\left(n^{-d+1/2} \cdot \left(\frac{d}{k}\right)^k \cdot
\left(\frac{e^2}{\const}\right)^{d-k}\right)\\
\;&=
\Omega\left(n^{-d+1/2} \cdot \left({e^{1/e}} \cdot \left(\frac{e^2}{\const}\right)^{1-1/e}\right)^{d}\right)\\
\;&=
\Omega\left(n^{-d+1/2} \cdot \left(\frac{10}{9}\right)^{d}\right).
\end{align*}
Finally, we show that, if SSWM does make this accepted jump, with high probability it climbs up to the global optimum before returning to a search point in $S_1$. To this end we work towards applying the negative drift theorem to the number of ones in the interval $[a := \lceil n-d + k/2 \rceil, b := n-d+k]$ and show that, since we start in state~$b$, a state $a$ or less is unlikely to be reached in polynomial time.

We first show that the drift is typically equal to that on \Onemax. For every search point with more than $a$ ones, in order to reach $S_1$, at least $k/2$ bits have to flip. Until this happens, SSWM behaves like on \onemax and hence reaches either a global optimum or a point in $S_1$ in expected time $O(n \log n)$.
The probability for a mutation flipping at least $k/2$ bits is at most $1/((\ln n)/(2e))! = n^{-\Omega(\log n)}$, so the probability that this happens in expected time $O(n \log n)$ is still $n^{-\Omega(\log n)}$.

Assuming such jumps do not occur, we can then use drift bounds from the analysis of \onemax for states with at least $a$ ones. From the proof of Theorem~\ref{the:upper-onemax} and \eqref{eq:onemax.drift} we know that the drift at $i$ ones for $\beta=1$ is at least
\[
\Delta(i) \ge \Omega\left(\frac{n-i}{n}\right).
\]
Let $p_{i, j}$ denote the transition probability from a state with $i$ ones to one with $j$ ones. The probability of decreasing the current state is at most $\pfix(-1) = O(1/n)$ due to~\eqref{eq:upper-bound-on-pfix-minus-one}. The probability of increasing the current state is at most $(n-i)/n$ as a necessary condition is that one out of $n-i$ zeros needs to flip. Hence for $i \le b$, which implies $n-i = \omega(1)$, the self-loop probability is at least
\[
p_{i, i} \ge 1 - O\left(\frac{1}{n}\right) - \frac{n-i}{n} = 1 - O\left(\frac{n-i}{n}\right).
\]
Together, we get $\Delta(i) \ge \Omega(1-p_{i, i})$, establishing the first condition of Theorem~\ref{thm:simplified-drift}.

Note that $\pfix(1) = \frac{1-e^{-2}}{1-1/n} = \Omega(1)$, hence
\begin{equation}
\label{eq:bound-on-1-pii}
1 - p_{i, i} \ge p_{i, i+1} \ge \frac{n-i}{en} \cdot \pfix(1) = \Omega\left(\frac{n-i}{n}\right).
\end{equation}
The second condition follows for improving jumps from $i$ to $i+j$, $j \ge 1$, from Lemma~\ref{lem:mutations-decreasing-ones} and~\eqref{eq:bound-on-1-pii}:
\[
p_{i, i+j} \le \left(\frac{n-i}{n}\right)^j \cdot \frac{1}{j!} \cdot \pfix(j)
\le \frac{n-i}{n} \cdot \frac{1}{j!}
\le (1-p_{i, i}) \cdot \frac{O(1)}{2^j}.
\]
For backward jumps we get, for $1 \le j \le k/2$, and $n$ large enough,
\[
p_{i, i-j} \le \pfix(-j) \le \frac{e^{2j}}{e^{2Nj}-1} =
\frac{e^{2j}}{(\const n)^{j}-1} \le 2^{-j}.
\]
Now Theorem~\ref{thm:simplified-drift} can be applied with $r = O(1)$ and $\delta = 1$ and it yields that the probability of reaching a state of $a$ or less in $n^{\omega(1)}$ steps is $n^{-\omega(1)}$.

This implies that following a length-$k$ jump, a trial is successful with probability $1-n^{-\omega(1)}$. This establishes
$\psuccess := \Omega\left(n^{-d+1/2} \cdot \left(\frac{10}{9}\right)^{d}\right)$. Plugging this into~\eqref{eq:Tpeak-recurrence}, adding time $R$ for the time to reach the peak initially, and using that $O(n^{1/2}\log n) \cdot (9/10)^d = e^{-\Omega(d)}$ for $d = \omega(\log n)$ yields the claimed bound.
\end{proof}

\section{SSWM Outperforms (1+1)~EA on Balance}
\label{sec:balance}

Finally, we investigate a feature that distinguishes SSWM from the \ea as well as the Metropolis algorithm: the fact that larger improvements are more likely to be accepted than smaller improvements.

To this end, we consider the function \balance, originally introduced by Rohlfshagen, Lehre, and Yao~\cite{RohlfshagenLehreYao2009} as an example where rapid dynamic changes in dynamic optimisation can be beneficial.
The function has also been studied in the context of stochastic ageing by Oliveto and Sudholt~\cite{Oliveto2014}.

In its static (non-dynamic) form, \balance can be illustrated by a two-dimensional plane, whose coordinates are determined by the number of leading ones (LO) in the first half of the bit string, and the number of ones in the second half, respectively. The former has a steeper gradient than the latter, as the leading ones part is weighted by a factor of~$n$ in the fitness (see Figure \ref{fig-balance}).

\begin{definition}[\balance~\cite{RohlfshagenLehreYao2009}]\label{def-balance}
Let $a,b \in \{0,1\}^{n/2}$ and $x = ab \in \{0,1\}^n$. Then, $\balance(x) = $
\begin{equation*}
	\begin{cases}
		n^3 & \text{if } \lo{a} = n/2, \text{else}\\
		\ones{b} + n \cdot \lo{a} & \text{if } n/16 < \ones{b} < 7n/16 , \text{else}\\
		n^2 \cdot \lo{a} & \text{if }  \zeros{a} > \sqrt{n}, \text{else}\\
		0 & \text{otherwise.}
	\end{cases}
\end{equation*}
where $\ones{x} = \sum_{i=1}^{n/2} x_i$,  $\zeros{x} $ is a number of zeros and $\lo{x} :=\sum_{i=1}^{n/2} \prod_{j=1}^i x_j$ counts the number of leading ones.
\end{definition}

\begin{figure}
	\center
	\begin{tikzpicture}[scale=0.4]
		\draw (0,0) rectangle (10,5);
		\draw (0,1) -- (9,1);
		\draw (0,4) -- (9,4);
		\draw (9,0) -- (9,5);
		\draw (8,0) -- (8,1);
		\draw (8,4) -- (8,5);
		
		\node at (8.5,0.5) {0};
		\node at (8.5,4.5) {0};
		\node at (9.5,2.5) {$n^3$};
		\node at (4,0.5) {$n^2 \cdot \lo{a}$};
		\node at (4,4.5) {$n^2 \cdot \lo{a}$};
		\node at (4,2.5) {$n \cdot \lo{a} + \ones{b}$};
		
		\draw[->] (0,-0.5) -- node[below]{$\lo{a}$} (10,-0.5);
		\draw[->] (-0.5,0) --node[left]{$\ones{b}$} (-0.5,5);
		
	\end{tikzpicture}
	\caption{\boldmath Visualisation of \balance\cite{RohlfshagenLehreYao2009}.\label{fig-balance}}
\end{figure}

The function is constructed in such a way that all points with a maximum number of leading ones are global optima, whereas increasing the number of ones in the second half beyond a threshold of $7n/16$ (or decreasing it below a symmetric threshold of $n/16$) leads to a trap, a region of local optima that is hard to escape from.

Rohlfshagen, Lehre, and Yao~\cite[Theorem~3]{RohlfshagenLehreYao2009} showed the following lower bound for the \ea, specialised to non-dynamic optimisation:
\begin{theorem}[\cite{RohlfshagenLehreYao2009}]
The expected optimisation time of the \ea on \balance is $n^{\Omega(n^{1/2})}$.
\end{theorem}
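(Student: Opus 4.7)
The plan is to partition the run of the \ea into phases and show that \wop the algorithm is funnelled into a region from which escape requires a simultaneous flip of $\Omega(\sqrt n)$ prescribed bits of~$a$. By standard tail bounds on the Binomial and geometric distributions, a uniformly random initial $x=ab$ satisfies \wop both $\ones{b}\in(n/16,7n/16)$ and $\zeros{a}\gg\sqrt n$, and also $\lo{a}=\bigO{\log n}$. Hence the algorithm almost certainly starts in the ``corridor'' where fitness equals $n\cdot\lo{a}+\ones{b}$.

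The technical heart of the argument is a race in the corridor between $\ones{b}$ hitting the upper wall $7n/16$ and $\lo{a}$ reaching $n/2$. Since an increase of $\lo{a}$ by one is worth $n$ units of fitness and any fitness-preserving change of $b$ alone must not decrease $\ones{b}$, the probability of a single-step increase of $\ones{b}$ is $\Omega((n/2-\ones{b})/n)$, while the probability of any increase of $\lo{a}$ is $\bigO{1/n}$---single-bit flips dominate, and $k$-bit jumps in $\lo{a}$ are bounded by $(1/n)^k$ via Lemma~\ref{lem:mutations-decreasing-ones}. Hence the expected time until $\ones{b}$ first reaches $7n/16$ is $\bigO{n}$, while the expected total $\lo{a}$-progress over this window is $\bigO{1}$. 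A Chernoff bound on the number of $\lo{a}$-increases then shows \wop that $\lo{a}\le n/2-\Omega(\sqrt n)$ at the moment $\ones{b}$ first exceeds $7n/16$.

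Once $\ones{b}\ge 7n/16$ the fitness jumps to $n^2\cdot\lo{a}$ provided $\zeros{a}>\sqrt n$, an $\Omega(n)$-factor increase, so any subsequent move that pushes $\ones{b}$ back into the corridor is rejected, and any move that makes $\zeros{a}\le\sqrt n$ with $\lo{a}<n/2$ drops into the trap of fitness~$0$ and is also rejected. Consequently $\lo{a}$ is monotone non-decreasing on the plateau and the search is confined to $\{x:\ones{b}\ge 7n/16,\,\zeros{a}\ge\sqrt n+1\}$, capping $\lo{a}$ at $n/2-\sqrt n-1$. From any such state, the only move to the global optimum of fitness $n^3$ is a single mutation flipping all $\ge\sqrt n+1$ remaining zeros of $a$ and none of its ones, an event of probability at most $n^{-\Omega(\sqrt n)}$; absorbing the polynomial time spent in the earlier phases via the law of total expectation yields the claimed lower bound $n^{\Omega(\sqrt n)}$.

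The main obstacle is the race argument in the corridor, namely ruling out rare multi-bit mutations that push $\lo{a}$ close to $n/2$ before $\ones{b}$ escapes. I would handle this by bounding the per-step expected increment of $\lo{a}$ as a geometric sum of $k$-bit contributions via Lemma~\ref{lem:mutations-decreasing-ones}, showing that this expectation is $\bigO{1/n}$ uniformly over the corridor, so that the total $\lo{a}$-drift over the expected $\bigO{n}$ corridor steps is $\bigO{1}$, far below $n/2-\sqrt n$.
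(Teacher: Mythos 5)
The paper does not prove this theorem itself --- it is imported verbatim from Rohlfshagen, Lehre and Yao \cite{RohlfshagenLehreYao2009}, so there is no in-paper proof to compare against. Your proposal reconstructs essentially the argument of that cited source: a race in the corridor between the $\onemax$-part (per-step drift $\Omega(1)$) and the $\leading$-part (per-step success probability $O(1/n)$), followed by confinement in the $n^2\cdot\lo{a}$ region, from which reaching $\lo{a}=n/2$ requires flipping all of the more than $\sqrt{n}$ zeros of $a$ in one mutation. The structure is sound and yields the claimed bound via the law of total expectation, since a constant trapping probability suffices.

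Two spots need patching before this is airtight. First, your claims that moves back into the corridor, and moves with $\zeros{a}\le\sqrt n$, are ``rejected'' from the trap both presuppose that the current trap fitness $n^2\cdot\lo{a}$ is positive, i.e.\ $\lo{a}\ge 1$; if $\lo{a}=0$ the trap has fitness $0$ and the \ea escapes freely. You must therefore argue that $\lo{a}\ge 1$ holds when the wall is first hit (easy: with probability $1/2$ the first bit of $a$ is initially $1$, and in the corridor $\lo{a}$ is non-decreasing since losing a leading one costs at least $n-n/2$ fitness; this gives the constant success probability you need). Second, in the race you implicitly treat $\ones{b}$ as non-decreasing, but an $\lo{a}$-increasing step may be accepted while decreasing $\ones{b}$ by up to the number of flipped bits; you should bound the total number of bits flipped across the (w.o.p.\ at most $O(\sqrt n)$) $\lo{a}$-increasing steps to show the cumulative backsliding of $\ones{b}$ is $o(n)$. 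Relatedly, you need $\zeros{a}>\sqrt n$ to still hold when the trap is entered; the suffix bits of $a$ drift neutrally, so a short concentration argument over the $O(n)$ corridor steps is required. None of these affects the viability of the approach.
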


We next show that SSWM with high probability finds an optimum in polynomial time.
For appropriately small~$\beta$ we have sufficiently many successes on the LO-part such that we find an optimum before the \onemax -part reaches the region of local optima. This is because for small~$\beta$ the probability of accepting small improvements is small. The fact that SSWM is slower than the \ea on \Onemax by a factor of $O(1/\beta)$ turns into an advantage over the \ea on \balance.

The following lemma shows that SSWM effectively uses elitist selection on the LO-part of the function in a sense that every decrease is rejected, with overwhelming probability.
\begin{lemma}
\label{lem:lo-elitism}
For every~$x = ab$ with $n/16 < \ones{b} < 7n/16$ and $\beta = n^{-3/2}$ and $N \beta = \ln n$, the probability of SSWM with local or global mutations accepting a mutant~$x'=a'b'$ with $\lo{a'} < \lo{a}$ and $n/16 < \ones{b'} < 7n/16$ is $O(n^{-n})$.
\end{lemma}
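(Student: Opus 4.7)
The plan is to show that the fitness difference $\Delta f = f(x')-f(x)$ is always at most a constant fraction of $-n$, and then invoke the upper bound from Lemma~\ref{lem:p.fix.bounds}. Since both $x$ and $x'$ lie in the middle-band regime of \balance, we have $f(x) = n\cdot\lo{a}+\ones{b}$ and $f(x')=n\cdot\lo{a'}+\ones{b'}$, so $\Delta f = n\bigl(\lo{a'}-\lo{a}\bigr)+\bigl(\ones{b'}-\ones{b}\bigr)$. Because $\lo{a'}<\lo{a}$ and leading-ones counts are integer-valued, $\lo{a'}-\lo{a}\le -1$, contributing at most $-n$. The second term is bounded by the width of the admissible band, $\ones{b'}-\ones{b}<7n/16-n/16=3n/8$. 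Hence $\Delta f\le -n+3n/8 = -5n/8$, uniformly over the mutation type and the particular $b,b'$. For local mutations the bound is even stronger since only one bit can change, but $-5n/8$ suffices for both cases.

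Next, I would apply the upper bound in Lemma~\ref{lem:p.fix.bounds} for $\Delta f\le 0$,
\[
\pfix(\Delta f)\;\le\;\frac{e^{-2\beta\Delta f}}{e^{-2N\beta\Delta f}-1}.
\]
With $N\beta=\ln n$ and $|\Delta f|\ge 5n/8$, the quantity $e^{-2N\beta\Delta f}=e^{2N\beta|\Delta f|}$ is at least $n^{5n/4}$, so the $-1$ in the denominator can be absorbed by halving. This gives $\pfix(\Delta f)\le 2\exp\!\bigl(-2(N-1)\beta\,|\Delta f|\bigr)$. Since $(N-1)\beta = N\beta - \beta = \ln n - n^{-3/2}$, plugging $|\Delta f|\ge 5n/8$ into the exponent yields a leading term of $-(5n/4)\ln n$ with an additive correction of order $n\cdot n^{-3/2}=n^{-1/2}=o(1)$. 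Therefore $\pfix(\Delta f) = O(n^{-5n/4}) = O(n^{-n})$, which is the claimed bound since $\pfix$ is monotone in $\Delta f$.

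The only delicate point is checking that the $-5n/8$ bound is genuinely uniform over the hypotheses, in particular that the $\ones{b'}-\ones{b}$ term cannot come close to cancelling the $-n$ drop from the leading-ones contribution; this is exactly why the middle-band width $6n/16$ matters, as it is strictly less than~$n$. Beyond that, the argument is bookkeeping: the parameter choices $\beta=n^{-3/2}$ and $N\beta=\ln n$ are tuned precisely so that $e^{2N\beta|\Delta f|}$ beats $e^{2\beta|\Delta f|}$ by a factor of $n^{\Omega(n)}$ whenever $|\Delta f|=\Omega(n)$, and the rest of the proof follows without further case analysis.
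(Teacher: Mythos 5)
Your proof is correct and follows essentially the same route as the paper's: bound the fitness loss from below by a constant fraction of $n$ (the paper uses the slightly weaker $n/2$ where you use $5n/8$), then apply the $\Delta f\le 0$ upper bound of Lemma~\ref{lem:p.fix.bounds} with $\beta=n^{-3/2}$, $N\beta=\ln n$ to get $O(n^{-n})$. The only differences are cosmetic bookkeeping (absorbing the $-1$ via a factor of $2$ versus keeping $n^n-1$ in the denominator).
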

\begin{proof}
The loss in fitness is at least $n-(\ones{b'}-\ones{b}) \ge n/2$.
The probability of SSWM accepting such a loss is at most
\begin{align}
\notag \pfix(-n/2) & \le  \frac{1-e^{-2\beta(-n/2)}}{1-e^{-2N \beta(-n/2)}}
\le \frac{e^{2\beta(n/2)}}{e^{2N \beta(n/2)}-1}.
\end{align}

Assuming $\beta = n^{-3/2}$ and $N \beta = \ln n$, this is at most
\begin{align*}
 \frac{e^{\frac{\sqrt{n}}{n}}}{n^{n}-1}
 \le \frac{e}{n^{n}-1}
= O(n^{-n}).
\qquad\qedhere
\end{align*}
\end{proof}

The following lemma establishes the optimisation time of the SSWM algorithm on either the \onemax or the LO-part of \balance.

For global mutations we restrict our considerations to \emph{relevant steps}, defined as steps where no leading ones in the first half of the bit string is flipped. The probability of a relevant step is always at least $(1-1/n)^{n/2} \approx e^{-1/2}$. When using local mutations, all steps are defined as relevant.
\begin{lemma}
\label{lem:time-on-lo-part}
Let $\beta = n^{-3/2}$ and $N \beta = \ln n$. With probability ${1-e^{-\Omega(n^{1/2})}}$, SSWM with either local or global mutations either optimises the LO part or reaches the trap (all search points with fitness $n^2 \cdot \lo{a}$) within
\[
T := \frac{n^2}{4} \cdot \frac{1}{\pfix(n-\sqrt{n})} \cdot \left(1 + n^{-1/4}\right)
\]
relevant steps.
\end{lemma}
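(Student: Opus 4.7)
The plan is to decompose the proof into three steps: (i) elitism on $\lo{a}$ via Lemma~\ref{lem:lo-elitism}, (ii) a lower bound on the per-relevant-step probability of an accepted LO-increase, and (iii) a Chernoff concentration over the $T$ relevant steps. For (i), any mutation strictly decreasing $\lo{a}$ while the process is in the intermediate band loses at least $n/2$ units of fitness, so by Lemma~\ref{lem:lo-elitism} it is accepted with probability $O(n^{-n})$; since $T$ is polynomial in $n$ (for $\beta = n^{-3/2}$ and $N\beta = \ln n$ we get $\pfix(n-\sqrt n) = \Theta(n^{-1/2})$ and hence $T = \Theta(n^{5/2})$), a union bound leaves failure probability $n^{-\omega(1)}$, and we may condition on $\lo{a}$ being non-decreasing throughout.

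For (ii), condition on a relevant step and consider the event $A$ that the leftmost $0$-bit of $a$ (the ``boundary'') is flipped. In both mutation regimes $\Prob{A} = 1/n$. On $A$, $\lo{a}$ increases by $1 + G$, where $G$ is the length of the post-mutation run of $1$'s immediately to the right of the new boundary. Since the mutation operator is bit-symmetric and the SSWM acceptance rule only depends on $\lo{a}$ and $\ones{b}$ (not on individual tail bits of $a$), the bits of $a$ strictly past the current boundary retain their $\mathrm{Bernoulli}(1/2)$ marginals throughout the process, yielding $\E{G} \ge 1 - o(1)$. The fitness change on $A$ is $\Delta f = n(1+G) + \Delta \ones{b}$; for local mutations $\Delta \ones{b} = 0$ exactly, while for global mutations a Chernoff bound on $\mathrm{Bin}(n/2, 1/n)$ gives $|\Delta \ones{b}| \le \sqrt n$ outside an $e^{-\Omega(\sqrt n)}$ event. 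Hence $\Delta f \ge n - \sqrt n$, so by monotonicity of $\pfix$ the step is accepted with probability at least $\pfix(n - \sqrt n)$. In total, each relevant step produces an accepted LO-gain of expected size at least $2 - o(1)$ with probability at least $p := (1-o(1)) \cdot \frac{1}{n} \cdot \pfix(n - \sqrt n)$.

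For (iii), let $N_T$ be the total LO-gain accumulated in $T$ relevant steps, capped when $\ones{b}$ exits the band. Then $\E{N_T} \ge T p (2-o(1)) = (n/2)(1 + n^{-1/4})(1-o(1))$, and a Chernoff / Azuma-type inequality applied to the adapted sequence of per-step gains yields $\Prob{N_T < n/2} \le \exp(-\Omega(n^{-1/2}\cdot n)) = e^{-\Omega(\sqrt n)}$. Combining with the elitism event, within $T$ relevant steps and outside an $e^{-\Omega(\sqrt n)}$ event either $\lo{a}$ reaches $n/2$ (LO-part optimised) or $\ones{b}$ leaves $(n/16, 7n/16)$ (trap reached), as claimed.

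The main obstacle will be the rigorous handling of the free-rider term $G$: even though symmetry strongly suggests $\E{G} \ge 1 - o(1)$, the tail of $a$ is shared across steps, so the per-step gains are only conditionally independent and one must verify that invariance of the $\mathrm{Bernoulli}(1/2)$ marginals transfers into a Chernoff-compatible concentration for the partial sums. A secondary technicality is coupling the capped process $N_T$ with the (possibly shorter) trajectory that stops once the band is exited, so that the concentration bound applies without double-counting the termination event.
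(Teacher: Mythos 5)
Your proposal follows essentially the same route as the paper's proof: elitism on the LO-part via Lemma~\ref{lem:lo-elitism}, a per-relevant-step improvement probability of at least $\frac{1}{n}\cdot\pfix(n-\sqrt n)$ obtained by bounding the \onemax-contribution to the fitness change by $\sqrt n$, free riders contributing an expected gain of roughly $2$ per improvement, and Chernoff-type concentration over the $T$ relevant steps. The only cosmetic difference is that the paper uses two separate Chernoff bounds --- one on the number of improvements and one on the free-rider total, invoking the known fact that the number of free riders is geometrically distributed with parameter $1/2$, which also disposes of the dependence issue you flag as your main obstacle --- instead of a single Azuma-type bound on the aggregated gain.
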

\begin{proof}
Consider a relevant step, implying that global mutations will leave all leading ones intact.
With probability $1/n$ a local or global mutation will flip the first 0-bit. This increases the fitness by $k \cdot n - \Delta_{\mathrm{OM}}$, where $\Delta_{\mathrm{OM}}$ is the difference in the \onemax-value of~$b$ caused by this mutation and $k$ is the number of consecutive 1-bits following this bit position after mutation. The latter bits are called \emph{free riders} and it is well known (see~\cite[Lemma~1 and proof of Theorem~2]{Lehre2012}) that the number of free riders follows a geometric distribution with parameter~$1/2$, only capped by the number of bits to the end of the bit string~$a$.

The probability of flipping at least $\sqrt{n}$ bits in one global mutation is at most $1/(\sqrt{n})! = e^{-\Omega(\sqrt{n})}$ and the probability that this happens at least once in $T$ relevant steps is still of the same order (using that $T = \poly{n}$ as $\pfix(n-\sqrt{n}) \ge 1/N \ge 1/\poly{n}$). We assume in the following that this does not happen, which allows us to assume $\Delta_{\mathrm{OM}} \le \sqrt{n}$. We also assume that the number of leading ones is never decreased during non-relevant steps as the probability of accepting such a fitness decrease is $O(n^{-n})$ by Lemma~\ref{lem:lo-elitism} and the expected number of non-relevant steps before $T$ relevant steps have occurred is $O(T)$.

We now have that the number of leading ones can never decrease and any increase by mutation is accepted with probability at least $\pfix(n-\sqrt{n})$. In a relevant step, the probability of increasing the number of leading ones is hence at least $1/n \cdot \pfix(n-\sqrt{n})$ and the expected number of such improvements in
\[
T := \frac{n^2}{4} \cdot \frac{1}{\pfix(n-\sqrt{n})} \cdot (1+n^{-1/4})
\]
relevant steps is at least $n/4 + n^{3/4}/4$.
By Chernoff bounds~\cite{Doerr2011chapter}, the probability that less than $n/4 + n^{3/4}/8$ improvements happen is $e^{-\Omega(n^{1/2})}$. Also the probability that during this number of improvements less than $n/4 - n^{3/4}/8$ free riders occur is $e^{-\Omega(n^{1/2})}$. If these two rare events do not happen, a LO-value of $n/2$ is reached before time~$T$. Taking the union bound over all rare failure probabilities proves the claim.
\end{proof}

We now show that the \onemax part is not optimized before the LO part.
\begin{lemma}
\label{lem:time-on-om-part}
Let $\beta = n^{-3/2}$, $N \beta = \ln n$, and $T$ be as in Lemma~\ref{lem:time-on-lo-part}. The probability that SSWM starting with $a_0b_0$ such that $n/4 \le \ones{b_0} \le n/4 + n^{3/4}$ creates a search point $ab$ with $\ones{b} \le n/16$ or $\ones{b} \ge 7n/16$ in~$T$ relevant steps is $e^{-\Omega(n^{1/2})}$.
\end{lemma}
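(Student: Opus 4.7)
The plan is to follow $X_t := \ones{b_t}$ and show it remains inside $(n/16, 7n/16)$ throughout the $T$ relevant steps except with probability $e^{-\bigOmega{\sqrt{n}}}$. Inside this interval $\balance(ab) = \ones{b} + n\lo{a}$, and a relevant step preserves $\lo{a}$ while the non-leading-ones bits of $a$ are selectively neutral, so the fitness change produced by a mutation equals exactly the change $\Delta$ in $\ones{b}$; offspring are therefore accepted with probability $\pfix(\Delta)$ and the walk on $X_t$ is driven by the second-half bits alone.

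The drift is bounded using Lemma~\ref{lem:p.fix.bounds}: $\pfix(j)\le 2j\beta$ for $j\ge 1$, while $\pfix(-j)=\bigO{\beta\,n^{-2j}}$ for $j\ge 1$ since $N\beta=\ln n$. Combined with Lemma~\ref{lem:mutations-decreasing-ones}, multi-bit jumps contribute only $\bigO{\beta/j!}$ to the drift, so for every $i\in(n/16,7n/16)$
\[
\E{X_{t+1}-X_t\mid X_t=i} \;\le\; C\beta\cdot\frac{n/2-i}{n}\,(1+\littleo{1})
\]
for an absolute constant $C$ (slightly different constants for local and global mutations). Iterating this linear upper bound -- equivalently, solving $\dot{x}=C\beta(n/2-x)/n$ -- gives $\E{X_t}\le n/2-(n/2-X_0)e^{-C\beta t/n}$. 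With $\beta=n^{-3/2}$ and $T=\bigTheta{n^{5/2}}$ we have $C\beta T/n=\bigTheta{1}$, so $\E{X_t}\le n/2-\bigOmega{n}$ uniformly in $t\le T$; combined with $X_0\le n/4+n^{3/4}$ this keeps the expected trajectory a distance $\bigOmega{n}$ below $7n/16$, and since the drift is non-negative $\E{X_t}\ge X_0\ge n/4$ stays a distance $\bigOmega{n}$ above $n/16$.

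Concentration comes from Bernstein's inequality applied to the Doob martingale $M_t:=X_t-\sum_{s\le t}\E{X_s-X_{s-1}\mid\mathcal{F}_{s-1}}$. Accepted moves happen with per-step probability $\bigO{\beta}$, and the squared-jump factor $\sum_{k\ge 1}k^2/k!=\bigO{1}$, so the total conditional variance is $V\le\bigO{T\beta}=\bigO{n}$. Local mutation gives $|\Delta_t|\le 1$ deterministically; for global mutation $\Prob{|\Delta_t|\ge\sqrt{n}}\le 1/(\sqrt{n})!$, so a union bound over $T=\poly{n}$ steps ensures $|M_t-M_{t-1}|\le c:=\sqrt{n}$ throughout with probability $1-e^{-\bigOmega{\sqrt{n}}}$. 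Bernstein then gives, for any $L=\bigOmega{n}$,
\[
\Prob{M_t\ge L}\le \exp\!\Bigl(-\tfrac{L^2}{2V+2cL/3}\Bigr)=e^{-\bigOmega{\sqrt{n}}},
\]
and a union bound over $t\le T$ preserves the exponent since $\sqrt{n}\gg\log T$. The symmetric lower-boundary event requires an even larger deviation against the positive drift, with the same bound. A standard stopping-time argument (replacing the walk by the process stopped at the first exit from $(n/16,7n/16)$) removes the circularity that the drift bound was only valid inside the interval.

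The main obstacle is that $T=\bigTheta{n^{5/2}}$ is so large that a plain Azuma--Hoeffding bound (ignoring variance) would give only $\exp(-L^2/(Tc^2))=\exp(-1/\sqrt{n})$, far too weak. One genuinely needs to exploit the laziness of the walk -- accepted moves occur with per-step probability only $\bigO{\beta}=\bigO{n^{-3/2}}$ -- via the variance-aware Bernstein tail. A secondary subtlety is that the worst-case constant drift over $(n/16,7n/16)$ summed over $T$ steps already exceeds the gap to the upper boundary, so one must use the linear (in $n/2-X_t$) form of the drift and its ODE integration rather than a naive maximum-drift bound.
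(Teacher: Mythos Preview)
Your proposal has a genuine gap at its very first step. You assert that ``a relevant step preserves $\lo{a}$'' and hence that the fitness change equals the change in $\ones{b}$. This is false: a \emph{relevant step} is defined as one in which no leading one of $a$ is flipped, but the first 0-bit of $a$ (and subsequent bits) may well flip, so $\lo{a}$ can \emph{increase} in a relevant step. Indeed, Lemma~\ref{lem:time-on-lo-part} is precisely about $\lo{a}$ reaching $n/2$ within $T$ relevant steps, so increases of $\lo{a}$ must occur. In any such step the fitness change is $\delta n + (\ones{b'}-\ones{b})$ with $\delta \ge 1$, and for $\beta = n^{-3/2}$ this is accepted with probability $\pfix(\Theta(n)) = \Theta(1)$, not $\pfix(\Delta) = O(\beta)$. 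Your laziness estimate ``accepted moves happen with per-step probability $O(\beta)$'' therefore fails for these steps, and with up to $n/2$ of them each accepted with constant probability, your variance bound $V = O(T\beta) = O(n)$ is not justified.

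The paper's proof explicitly separates these \emph{special steps} from the regular ones where $\lo{a}$ is unchanged. For regular steps it carries out a constant-aware calculation (showing the total progress is at most $\tfrac{1.14}{9}n + o(n)$, which is below the gap $7n/16 - n/4 - n^{3/4}$); your ODE bound $\E{X_t}\le n/2 - (n/4)e^{-C\beta T/n}$ would also need the explicit constant $C$ to verify the threshold is not crossed, which you do not supply. For the special steps the paper uses a pairing argument on the bits of $b$ to bound the bias $\pfix(\delta n + n/2) - \pfix(\delta n - n/2) = O(n^{-1/2})$ per pair, and then a tailored concentration inequality (Lemma~\ref{lem:sum-of-Yi}) to show that the cumulative progress across all special steps is $O(n^{3/4})$ with the required probability. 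This second case is the missing ingredient in your argument.
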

It will become obvious that in $T$ relevant steps SSWM typically makes a progress of $O(n)$ on the \onemax part.
The proof of Lemma~\ref{lem:time-on-om-part} requires a careful and delicate analysis to show that the constant factors are small enough such that the stated thresholds for $\ones{b}$ are not surpassed.
\begin{proof}[Proof of Lemma~\ref{lem:time-on-om-part}]
We only prove that a search point with $\ones{b} \ge 7n/16$ is unlikely to be reached with the claimed probability. The probability for reaching a search point with $\ones{b} \le n/16$ is clearly no larger, and a union bound for these two events leads to a factor of 2 absorbed in the asymptotic notation.

Note that for $\beta = n^{-3/2}$ we have
\[
\pfix(n-\sqrt{n}) \ge \frac{2\beta(n-\sqrt{n})}{1+2\beta(n-\sqrt{n})}
\ge
2\beta n \cdot (1-O(n^{-1/2})).
\]
Hence
\[
T \le \frac{n^2}{4} \cdot \frac{1}{2\beta n} \cdot \left(1 + O(n^{-1/2})\right)
= \frac{n}{8\beta} \cdot \left(1 + O(n^{-1/2})\right).
\]
We call a relevant step \emph{improving} if the number of ones in~$b$ increases and the step is accepted.

We first consider only steps where the number of leading ones stays the same.
Then the probability that the \onemax value increases from~$k$ by~$j$, adapting Lemma~\ref{lem:mutations-decreasing-ones} to a string of length~$n/2$, is at most
\begin{align*}
p_{j} \le\;& \left(\frac{n/2-k}{n}\right)^j \cdot \frac{1.14}{j!} \cdot \pfix(j)\\
\intertext{using $n/2 - k \le n/4$}
\le\;& \frac{1.14 \cdot 4^{-j}}{j!} \cdot \pfix(j) \leq \frac{1.14 \cdot 4^{-j}}{j!} \cdot \frac{2\beta j}{1-e^{-2N \beta j}}\\
\le\;& 2.28\beta \cdot 4^{-j} \cdot \frac{1}{1-e^{-2N \beta j}} =: p_j.
\end{align*}
In the following, we work with pessimistic transition probabilities~$p_j$.
Note that for all $j \ge 1$
\begin{align*}
\frac{p_j}{p_1} = 4^{-(j-1)} \cdot \frac{1-e^{-2N \beta}}{1-e^{-2N \beta j}}
\le 4^{-(j-1)}.
\end{align*}
Let $p^+$ denote (a lower bound on) the probability of an improving step, then
\begin{align*}
p^+ \le\;& \sum_{j=1}^{\infty} p_j \le p_1 \cdot \sum_{j=1}^\infty 4^{-(j-1)} = p_1 \cdot \frac{4}{3}.
\end{align*}
The conditional probability of advancing by~$j$, given an improving step, is then
\begin{align*}
\frac{p_j}{p^+} \le 4^{-(j-1)} \cdot \frac{p_1}{p^+} = \left(1 - \frac{3}{4}\right)^{j-1} \cdot \frac{3}{4},
\end{align*}
which corresponds to a geometric distribution with parameter~$3/4$.

Now, by Chernoff bounds, the probability of having more than $S := (1+n^{-1/4}) \cdot p^+ \cdot T$ improving steps in $T$ relevant steps is $e^{-\Omega(n^{1/2})}$. Using a Chernoff bound for geometric random variables~\cite[Theorem~1.14]{Doerr2011chapter}, the probability of $S$ improving steps yielding a total progress of at least ${(1+n^{-1/4}) \cdot 4/3 \cdot S}$ is $e^{-\Omega(n^{1/2})}$.

If none of these rare events happen, the progress is at most
\begin{align*}
& (1+O(n^{-1/4})) \cdot \frac{4}{3} \cdot p^+ \cdot T\\
=\;& (1+O(n^{-1/4})) \cdot \frac{16}{9} \cdot p_1 \cdot T\\
\le\;& (1+O(n^{-1/4})) \cdot \frac{1.14}{9} \cdot n.
\end{align*}

We also have at most $n/2$ steps where the number of leading ones increases. If the number of leading ones increases by $\delta \ge 1$, the fitness increase is $\delta n + \ones{b'} - \ones{b}$. Hence the above estimations of jump lengths are not applicable.
We call these \emph{special} steps; they are unorthodox as the large fitness increase makes it likely that any mutation on the \onemax part is accepted. We show that the progress on the \onemax part across all special steps is $O(n^{3/4})$ with high probability.

We grant the algorithm an advantage if we assume that, after initialising with $\ones{b} \ge n/4$, no search point with $\ones{b} < n/4$ is ever reached\footnote{Otherwise, we restart our considerations from the first point in time where $\ones{b} \ge n/4$ again, replacing $T$ with the number of remaining steps. With overwhelming probability we will then again have $\ones{b} \le n/4 + n^{3/4}$.}. Under this assumption we always have at least as many 1-bits as 0-bits in $b$, and mutation in expectation flips at least as many 1-bits to 0 as 0-bits to 1.

Then the progress in $\ones{b}$ in one special step increasing the number of leading ones by $\loincrease$ can be described as follows. Imagine a matching (pairing) between all bits in $b$ such that each pair contains at least one 1-bit. Let $X_i$ denote the random change in $\ones{b}$ by the $i$-th pair. If the pair has two 1-bits, $X_i \le 0$ with probability~1. Otherwise, we have $X_i = 1$ if the 0-bit in the pair is flipped, the 1-bit in the pair is not flipped, and the mutant is accepted (which depends on the overall $\ones{b}$-value in the mutant). The potential fitness increase is at most $\loincrease n + n/2$ as the range of $\ones{b}$-values is $n/2$. Likewise, we have $X_i = -1$ if the 0-bit is not flipped, the 1-bit is flipped, and the mutant is accepted (which again depends on the overall $\ones{b}$-value in the mutant). The fitness increase is at least $\loincrease n - n/2$. With the remaining probability we have $X_i = 0$. Hence for global mutations (for local mutations simply drop the $1-1/n$ term)
the total progress in a special step increasing $\lo{a}$ by~$\loincrease$ is stochastically dominated by a sum of independent variables $Y_1, \dots, Y_{n/4}$ where $\Prob{Y_i = \pm 1} = 1/n \cdot (1-1/n) \cdot \pfix(\loincrease n \pm n/2)$ and $Y_i = 0$ with the remaining probability.

There is a bias towards increasing the number of ones due to differences in the arguments of $\pfix$: $\E{Y_i} = 1/n \cdot (1-1/n) \cdot (\pfix(\loincrease n + n/2) - \pfix(\loincrease n - n/2))$.
Using the definition of $\pfix$ and preconditions $\beta = n^{-3/2}$, $N \beta = \ln n$, the bracket is bounded as
\begin{align*}
& \pfix(\loincrease n + n/2) - \pfix(\loincrease n - n/2)\\
=\;& \frac{1-e^{-2\loincrease n^{-1/2} - n^{-1/2}}}{1-n^{-2\loincrease n + n}} - \frac{1-e^{-2\loincrease n^{-1/2} + n^{-1/2}}}{1-n^{-2\loincrease n - n}}\\
=\;& (1+o(1)) \left(\left(1-e^{-2\loincrease n^{-1/2} - n^{-1/2}}\right) - \left(1-e^{-2\loincrease n^{-1/2} + n^{-1/2}}\right)\right)\\
%=\;& (1+o(1)) \left(e^{-2\loincrease n^{-1/2} + n^{-1/2}}-e^{-2\loincrease n^{-1/2} - n^{-1/2}}\right)\\
=\;& (1+o(1)) \cdot e^{-2\loincrease n^{-1/2}}  \left(e^{n^{-1/2}}-e^{- n^{-1/2}}\right)\\
\le\;& (1+o(1)) \cdot e^{-2\loincrease n^{-1/2}} \left((1+2n^{-1/2})-(1-n^{-1/2})\right)\\
=\;& (1+o(1)) \cdot e^{-2\loincrease n^{-1/2}} \cdot 3n^{-1/2}
\end{align*}
where in the last inequality we have used $1+x \le e^x$ for all~$x$ and $e^x \le 1+2x$ for $0 \le x \le 1$.

Note that the expectation, and hence the bias, is largest for $\loincrease=1$, in which case we get, using $e^{-2\loincrease n^{-1/2}} \le e^{-2n^{-1/2}} \le 1$,
\begin{equation*}
\E{Y_i} \le (1+o(1)) \cdot 1/n \cdot (1-1/n) \cdot 3n^{-1/2} \le 4n^{-3/2}
\end{equation*}
for $n$ large enough.

The total progress in all $m$ special steps is hence stochastically dominated by a sequence of $m \cdot n/4$ random variables $Y_i$ as defined above, with $\loincrease := 1$.
Invoking Lemma~\ref{lem:sum-of-Yi}, stated in the appendix, with $\delta := n^{3/4}$, the total progress in all special steps is at most
$\delta + m \cdot n/4 \cdot \E{Y_i} = \delta + O(n^{1/2}) =  O(n^{3/4})$
with probability $1-e^{-\Omega(n^{1/2})}$.

Hence the net gain in the number of ones in all special steps is at most $n^{3/4} + O(mn/4 \cdot n^{-3/2}) = O(n^{3/4})$ with probability ${1-e^{-\Omega(n^{1/2})}}$.

Together with all regular steps, the progress on the \onemax part is at most $1.14n/9  + O(n^{3/4})$, which for large enough~$n$ is less than the distance $7n/16 - (n/4+n^{3/4})$ to reach a point with $\ones{b} \ge 7n/16$ from initialisation.
This proves the claim.
\end{proof}

Finally, we put the previous lemmas together into our main theorem that establishes that SSWM can optimise \balance in polynomial time.
\begin{theorem}
With probability $1-e^{-\Omega(n^{1/2})}$ SSWM with $\beta = n^{-3/2}$ and $N \beta = \ln n$ optimises \balance in time $O(n/\beta) = O(n^{5/2})$.
\end{theorem}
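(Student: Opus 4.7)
The plan is to combine Lemmas \ref{lem:lo-elitism}, \ref{lem:time-on-lo-part}, and \ref{lem:time-on-om-part} via a union bound; the hard work has already been done in those lemmas, so the proof is essentially a careful assembly. First, I would analyse the random initial search point $a_0 b_0$. Since each bit of $b_0$ is independently~$1$ with probability $1/2$, $\ones{b_0}$ is binomially distributed with mean $n/4$, and a standard Chernoff bound yields $|\ones{b_0} - n/4| \le n^{3/4}$ with probability $1 - e^{-\Omega(n^{1/2})}$. In particular $\ones{b_0}$ starts strictly inside the feasible band $(n/16,\, 7n/16)$ and within the range $[n/4, n/4+n^{3/4}]$ required by Lemma~\ref{lem:time-on-om-part}, or its mirror image, which follows by swapping the roles of $0$ and $1$ in the \onemax-part.

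Next, on the event of good initialisation, I would apply Lemmas~\ref{lem:time-on-lo-part} and~\ref{lem:time-on-om-part} simultaneously to the first $T = \frac{n^2}{4}\cdot \pfix(n-\sqrt{n})^{-1}\cdot(1+n^{-1/4})$ relevant steps. Lemma~\ref{lem:time-on-om-part} ensures that, except with probability $e^{-\Omega(n^{1/2})}$, the string $b$ never drifts out of the feasible band during this window, so SSWM never enters the trap region of local optima. Lemma~\ref{lem:time-on-lo-part} ensures that, except with probability $e^{-\Omega(n^{1/2})}$, within $T$ relevant steps either the LO-part is optimised or the trap is reached. Intersecting these two events, the only remaining possibility is that the LO-part reaches $n/2$; then $a = 1^{n/2}$, the fitness jumps to $n^3$, and a global optimum has been accepted.

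The final step is to convert $T$ \emph{relevant} steps into actual generations. For local mutations every step is relevant, so there is nothing to do. For global mutations, a step is relevant with probability at least $(1-1/n)^{n/2} \ge e^{-1/2}/2$ for large~$n$, and a Chernoff bound guarantees that $T$ relevant steps have occurred within, say, $3eT$ generations with probability $1 - e^{-\Omega(T)}$. Using the estimate $\pfix(n-\sqrt{n}) \ge 2\beta n \cdot (1 - o(1))$ from Lemma~\ref{lem:p.fix.bounds} (already invoked inside Lemma~\ref{lem:time-on-om-part}), one gets $T = \tfrac{n}{8\beta}(1+o(1)) = O(n/\beta) = O(n^{5/2})$, matching the claimed runtime.

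I do not expect any serious obstacles: the three supporting lemmas already carry the technical weight. The only genuine care required is (i) arguing the lower-tail case $\ones{b_0} < n/4$ via a symmetric restatement of Lemma~\ref{lem:time-on-om-part}, and (ii) collecting the various exponentially small failure probabilities — good initialisation, trap avoidance, LO-part success, and the Chernoff bound on relevant steps — into a single union bound that still gives $1 - e^{-\Omega(n^{1/2})}$.
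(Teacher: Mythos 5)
Your proposal is correct and follows essentially the same route as the paper: a Chernoff bound on the initial $\ones{b_0}$, then Lemma~\ref{lem:time-on-om-part} to rule out the trap within $T$ relevant steps, Lemma~\ref{lem:time-on-lo-part} to conclude the LO-part is optimised, and a final Chernoff bound to convert relevant steps into $O(T)=O(n/\beta)$ generations. The paper handles the case $\ones{b_0}<n/4$ by simply assuming the pessimistic side, whereas you invoke the $0/1$ symmetry explicitly; this is a cosmetic difference only.
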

\begin{proof}
By Chernoff bounds, the probability that for the initial solution $x_0 = a_0 b_0$ we have $n/4 - n^{3/4} \le \ones{b_0} \le n/4 +n^{3/4}$ is $1-e^{-\Omega(n^{1/2})}$. We assume pessimistically that $n/4 \le \ones{b_0} \le n/4 +n^{3/4}$. Then Lemma~\ref{lem:time-on-om-part} is in force, and with probability $1-e^{-\Omega(n^{1/2})}$ within $T$ relevant steps, $T$ as defined in Lemma~\ref{lem:time-on-lo-part}, SSWM does not reach a trap or a search point with fitness~$0$. Lemma~\ref{lem:time-on-lo-part} then implies that with probability $1-e^{-\Omega(n^{1/2})}$ an optimal solution with $n/2$ leading ones is found.

The time bound follows from the fact that $T = O(n/\beta)$ and that, again by Chernoff bounds, we have at least $T$ relevant steps in $3T$ iterations of SSWM, with probability $1-e^{-\Omega(n^{1/2})}$.
\end{proof}

\section{Conclusions}

The field of evolutionary computation
has matured to the point where techniques can be applied to models of natural evolution. Our analyses have demonstrated that runtime analysis of evolutionary algorithms can be used to analyse a simple model of natural evolution, opening new opportunities for interdisciplinary research with population geneticists and biologists.

Our conclusions are highly relevant for biology, and open the door to the analysis of more complex fitness landscapes in this field and to quantifying the efficiency of evolutionary processes in more realistic scenarios of evolution.
One interesting aspect of our results is that they impose conditions on population size ($N$) and strength of selection ($\beta$) which represent fundamental limits to what is possible by natural selection.
We hope that these results may inspire further research on the similarities and differences between natural and artificial evolution.

From a computational perspective, we have shown that SSWM can overcome obstacles such as posed by $\cliff{d}$ and $\balance$ in different ways to the \ea, due to its non-elitistic selection mechanism. We have seen how the probability of accepting a mutant can be tuned to enable hill climbing, where fitness-proportional selection fails, as well as tunnelling through fitness valleys, where elitist selection fails. For \balance we showed that SSWM can take advantage of information about the steepest gradient. The selection rule in SSWM hence seems to be a versatile and useful mechanism. Future work could investigate its usefulness in the context of population-based evolutionary algorithms.

\bigskip
\textbf{Acknowledgments:}
The research leading to these results has received funding from the European Union Seventh Framework Programme (FP7/2007-2013) under grant agreement no 618091 (SAGE). The authors thank the anonymous GECCO reviewers for their many constructive comments.

\bibliographystyle{abbrv}
\bibliography{literature-short}

\clearpage

\onecolumn
\appendix

This appendix contains proofs that were omitted from the main part.

\begin{lemma}
\label{lemma:pfix-strictly-increasing}
$\pfix$ is monotonic for all $N\geq 1$ and strictly increasing for $N>1$	
\end{lemma}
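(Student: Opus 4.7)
The plan is to exploit the fact that $N$ is a positive integer, which permits a clean geometric series factorization of the denominator of $\pfix$. Using the identity $1-x^N = (1-x)\sum_{k=0}^{N-1}x^k$ with $x=e^{-2\beta\Delta f}$, I would rewrite, for $\Delta f\neq 0$,
\[
\pfix(\Delta f) \;=\; \frac{1-e^{-2\beta\Delta f}}{(1-e^{-2\beta\Delta f})\sum_{k=0}^{N-1}e^{-2k\beta\Delta f}} \;=\; \frac{1}{\sum_{k=0}^{N-1}e^{-2k\beta\Delta f}}.
\]
The factor $1-e^{-2\beta\Delta f}$ has the same sign in numerator and denominator and cancels, so this holds for both positive and negative $\Delta f$.

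Next I would analyse the denominator $D(\Delta f):=\sum_{k=0}^{N-1}e^{-2k\beta\Delta f}$ term by term. The $k=0$ term equals $1$ and contributes nothing to the variation in $\Delta f$. Each term with $k\ge 1$ is of the form $e^{-c_k \Delta f}$ with $c_k=2k\beta>0$, hence strictly decreasing in $\Delta f$. For $N=1$ the sum reduces to the single constant term $1$, so $\pfix\equiv 1$ on $\Delta f\neq 0$, which is trivially monotonic. For $N>1$ at least one strictly decreasing summand is present, so $D$ is strictly decreasing and everywhere positive; therefore its reciprocal $\pfix$ is strictly increasing on $\{\Delta f\neq 0\}$.

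Finally I would extend the conclusion across $\Delta f=0$ using the definition $\pfix(0):=\lim_{\Delta f\to 0}\pfix(\Delta f)=\tfrac{1}{N}$ given in the preliminaries. Since $D(0)=N$, the rewritten expression $1/D(\Delta f)$ is continuous at $0$ with the same value $1/N$, so the strict monotonicity on each of the open half-lines plus continuity at the single point $\Delta f=0$ yields strict monotonicity on all of $\mathbb{R}$ when $N>1$, and constancy (hence monotonicity) when $N=1$.

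There is no serious obstacle here; the only minor subtlety is keeping track of signs in the geometric factorization so that the argument is valid for negative $\Delta f$ as well. An alternative route would be to compute $\partial\pfix/\partial(\Delta f)$ directly and prove positivity of the resulting numerator, but the derivative-based path requires showing that the auxiliary function $e^{-s}-Ne^{-Ns}+(N-1)e^{-(N+1)s}$ is nonnegative, which is noticeably heavier than the factorization argument above.
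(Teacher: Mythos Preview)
Your argument is correct and takes a genuinely different, cleaner route than the paper. The paper computes the derivative of $\pfix$ with respect to $\Delta f$ via the quotient rule and then argues, separately for $\s>0$ and $\s<0$, that the numerator has the right sign by comparing the quantities $e^{-2\s}/e^{-2N\s}$ and $(1-e^{-2N\s})/(1-e^{-2\s})$ to~$1$. You instead exploit that $N$ is a positive integer and factor the denominator as a geometric series, obtaining the closed form $\pfix(\Delta f)=\bigl(\sum_{k=0}^{N-1}e^{-2k\beta\Delta f}\bigr)^{-1}$, from which monotonicity is immediate because each nonconstant summand is strictly decreasing.

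What your route buys: it is shorter, avoids any calculus, treats positive and negative $\Delta f$ uniformly without a case split, and makes the value $\pfix(0)=1/N$ drop out for free from $D(0)=N$. What the paper's derivative approach would buy (in principle) is that it does not require $N$ to be an integer; however, since $N$ is explicitly a population size in $\mathbb{N}^+$ throughout the paper, this extra generality is not needed here. Your closing remark that the derivative path leads to showing nonnegativity of an auxiliary expression like $e^{-s}-Ne^{-Ns}+(N-1)e^{-(N+1)s}$ is apt; indeed the paper's algebraic manipulations along that line are somewhat delicate, whereas your factorization sidesteps the issue entirely.
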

\begin{proof}%{$p_\text{fix}$ is strictly increasing}
	If $N=1$, $\pfix(\s)=1$.  In order to show that $p_\text{fix}(\Delta f)$ is monotonically increasing we show that $\frac{d p_\text{fix}(\Delta f)}{d\Delta f}= \frac{2e^{-2\s}}{1-e^{-2N\s}}-N\frac{e^{-2N\s}(1-e^{-2\s})}{(1-e^{-2N\s})^2} >0$ for all $\df$. For $\s>0$ and $N>1$, we have $e^{-2\s}<1$, and $e^{-2\s}>e^{-2N\s}$. For $\s<0$, the inequalities are reversed.
	If $\s>0$:
\begin{align*}
	& \frac{2e^{-2\s}}{1-e^{-2N\s}}-N\frac{e^{-2N\s}(1-e^{-2\s})}{(1-e^{-2N\s})^2} >0 \\
\Leftrightarrow	& e^{-2\s}\left(1-e^{-2N\s}\right) -Ne^{-2N\s}\left(1-e^{-2\s}\right) >0\\
\Leftrightarrow	& \frac{e^{-2\s}}{e^{-2N\s}} > \frac{1-e^{-2N\s}}{1-e^{-2\s}}.
\end{align*}
Since $\frac{e^{-2\s}}{e^{-2N\s}}>1$ and $ \frac{1-e^{-2N\s}}{1-e^{-2\s}} <1$ this proves the claim for $\s>0$. For $\s<0$ all the inequalities are reversed and $\frac{e^{-2\s}}{e^{-2N\s}}<1$ and $ \frac{1-e^{-2N\s}}{1-e^{-2\s}} >1$.
\end{proof}

\begin{proof}[Proof of Lemma~\ref{lem:mutations-decreasing-ones}]
We follow the proof of Lemma~2 in~\cite{Sudholt2012c}. An offspring with $i+k$ 1-bits is created if and only if there is an integer $j \in \N_0$ such that $j$ 1-bits flip and $k+j$ 0-bits flip.
\begin{align*}
& \mut(i, i+k)\\
=\;& \sum_{j=0}^n \binom{i}{j} \binom{n-i}{k+j} \left(\frac{1}{n}\right)^{k+2j} \left(1-\frac{1}{n}\right)^{n-k-2j}\\
=\;& \left(\frac{1}{n}\right)^k \left(1-\frac{1}{n}\right)^{n-k} \cdot \sum_{j=0}^n \binom{i}{j} \binom{n-i}{k+j} \left(\frac{1}{n-1}\right)^{2j}.
\intertext{Using $\binom{n-i}{k+j} = \frac{1}{(k+j)!} \cdot (n-i) \cdot (n-i-1) \cdot \ldots \cdot (n-i-k-j+1) \le \frac{1}{(k+j)!} \cdot (n-i)^k \cdot (n-i-1)^j$, this is at most}
\le\;& \left(\frac{1}{n}\right)^k \left(1-\frac{1}{n}\right)^{n-k} \cdot \sum_{j=0}^n \frac{(n-i)^k}{j!(k+j)!} \cdot \left(\frac{i (n-i-1)}{(n-1)^2}\right)^j.\\
\intertext{It is easy to see that $\frac{i (n-i-1)}{(n-1)^2} \le \frac{1}{4}$ for all $i$, as the maximum is attained for $i = \frac{n}{2} - \frac{1}{2}$. Hence we get an upper bound of}
\le\;& \left(\frac{n-i}{n}\right)^k \left(1-\frac{1}{n}\right)^{n-k} \cdot \sum_{j=0}^n \frac{4^{-j}}{j!(k+j)!}\\
\intertext{Using $(k+j)! \ge k!(j+1)!$ for all $k \in \N$, $j \in \N_0$,}
\le\;& \left(\frac{n-i}{n}\right)^k \left(1-\frac{1}{n}\right)^{n-k} \cdot \frac{1}{k!} \sum_{j=0}^\infty \frac{4^{-j}}{j!(j+1)!}\\
\le\;& \left(\frac{n-i}{n}\right)^k \left(1-\frac{1}{n}\right)^{n-k} \cdot \frac{1.14}{k!}.
\end{align*}

The proof for mutations decreasing the number of ones follows immediately due to the symmetry $\mut(i,i-k)=\mut(n-i,n-i+k)$.
\end{proof}

\begin{proof}[Proof of Lemma \ref{lem:conditional-mut}]
The proof consists of two parts:\\
1) The probability of improving by $j-i=k$ bits is at least twice as large as the probability of improving by $k+1$ bits, i.e. $\mut(i, i+k)\ge 2 \mut(i, i+k+1)$ for any $0\le i < j \le n$. \\
2)  We use 1) to prove that $  \frac{\mut(i, j)}{\sum_{m=j}^n \mut(i, m)}  \ge \dfrac{1}{2}$.\\

\textbf{Part 1)}
The probability to improve by $k$ bits is
\[
 \mut(i, i+k)= \sum_{l=0}^n \binom{i}{l} \binom{n-i}{k+l} \left(\frac{1}{n}\right)^{k+2l} \left(1-\frac{1}{n}\right)^{n-k-2l}\\
\]
while the probability to improve by $k+1$ bits is
\[
 \mut(i, i+k+1)= \sum_{l=0}^n \binom{i}{l} \binom{n-i}{k+l+1} \left(\frac{1}{n}\right)^{k+2l+1} \left(1-\frac{1}{n}\right)^{n-k-2l-1}.
\]

We want to show that the following is true
\begin{align*}
 \mut(i, i+k)&\ge 2 \mut(i, i+k+1) \Leftrightarrow\\
 \sum_{l=0}^n \binom{i}{l} \binom{n-i}{k+l} \left(\frac{1}{n}\right)^{k+2l} \left(1-\frac{1}{n}\right)^{n-k-2l} &\ge
 2 \sum_{l=0}^n \binom{i}{l} \binom{n-i}{k+l+1} \left(\frac{1}{n}\right)^{k+2l+1} \left(1-\frac{1}{n}\right)^{n-k-2l-1} \Leftrightarrow \\
 \sum_{l=0}^n \binom{i}{l} \binom{n-i}{k+l}  \left(n-1\right)^{n-k-2l} &\ge
 2 \sum_{l=0}^n \binom{i}{l} \binom{n-i}{k+l+1}  \left(n-1\right)^{n-k-2l-1} \Leftrightarrow
\end{align*}
\begin{align*}
  \sum_{l=0}^n \frac{i!(n-i)!}{l!(i-l)!} \frac{(n-1)^{n-k-2l}}{(n-i-k-l-1)!(k+l)!}
  \left[ \frac{1}{(n-i-k-l)} -\frac{2}{(n-1)(k+l+1)} \right]   &\ge 0.
\end{align*}

This holds if following holds for any $0\le l\le n$
\begin{align*}
 \left[ \frac{1}{(n-i-k-l)} -\frac{2}{(n-1)(k+l+1)} \right]   &\ge 0\\
    (n-1)(k+l+1) &\ge 2(n-i-k-l).
\end{align*}
Which  is true for any $k\ge 1$ (thus for any $0\le i<j\le n$).\\

\textbf{Part 2)}
Using the above inequality $\mut(i, i+k)\ge 2 \mut(i, i+k+1)$ we can bound  every possible improvement better than $k$ from above by
\begin{align*}
\mut(i, i+k+l)\le \left(\frac{1}{2}\right)^l \mut(i,i+k)
\end{align*}
for any $0 \le l \le n-i-k$.  This can also be written as
\begin{align*}
\mut(i,j+l)\le \left(\frac{1}{2}\right)^l \mut(i,j)
\end{align*}
for any $0 \le l \le n-j$.
This leads to
\begin{align*}
  \frac{\mut(i, j)}{\sum_{m=j}^n \mut(i, m)} &=  \frac{\mut(i, j)}{\sum_{l=0}^{n-j} \mut(i, j+l)} \\
  & \ge  \frac{\mut(i, j)}{\sum_{l=0}^{n-j} \left(\frac{1}{2}\right)^l \mut(i, j) }\\
&= \frac{1}{\sum_{l=0}^{n-j} \left(\frac{1}{2}\right)^l} = \frac{1}{2-\frac{1}{2^{n-j}}}\ge \frac{1}{2}\\
\end{align*}
which proves Lemma \ref{lem:conditional-mut}.

\end{proof}

\begin{lemma}
\label{lem:sum-of-Yi}
Consider independent random variables $Y_1, \dots, Y_t$ where
\[
Y_i = \begin{cases}  1  &\text{with probability~$p$} \\
 0  &\text{with probability~$1-p-r$}\\
-1  & \text{with probability~$r$}
\end{cases}
\]
then for $Y=\sum_{i=1}^tY_i$ we have $\E{Y}=t(p-r)$ and for every $0 \le \delta \le t(p+r)$
\[  P(Y\ge E(Y)+\delta) \le \; e^{-\Omega \left(t(p+r)\right)} + e^{-\Omega \left(\frac{\delta^2}{t(p+r)}\right)}. \]
\end{lemma}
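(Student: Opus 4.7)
The expectation identity is immediate: by linearity, $\E{Y}=t\E{Y_i}=t(p-r)$, since $\E{Y_i}=p\cdot 1+(1-p-r)\cdot 0+r\cdot(-1)=p-r$.

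For the concentration inequality, my main plan is to invoke Bernstein's inequality applied to the centered summands $X_i:=Y_i-(p-r)$. Each $X_i$ has mean zero, satisfies $|X_i|\le 2$, and has variance $\Var{X_i}=\Var{Y_i}=(p+r)-(p-r)^2\le p+r$; hence $V:=\sum_{i=1}^t\Var{X_i}\le t(p+r)$. Bernstein then gives
\begin{equation*}
P(Y-\E{Y}\ge\delta)\le\exp\!\left(-\frac{\delta^{2}/2}{t(p+r)+2\delta/3}\right),
\end{equation*}
and in the stated range $0\le\delta\le t(p+r)$ the denominator is $\Theta(t(p+r))$, so this single exponential is already of the form $\exp(-\Omega(\delta^2/(t(p+r))))$, which dominates the claimed bound (the summand $e^{-\Omega(t(p+r))}$ can be inserted for free).

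Alternatively, the two-term shape of the statement points at a direct two-stage conditioning argument that one could run without citing Bernstein. Let $Z:=|\{i:Y_i\ne 0\}|$; then $Z\sim\mathrm{Bin}(t,p+r)$ with $\E{Z}=t(p+r)$, and a multiplicative Chernoff bound yields $P(Z\ge 2t(p+r))\le e^{-\Omega(t(p+r))}$, accounting for the first summand. Conditional on $Z=z$, the nonzero entries are independently $+1$ with probability $q:=p/(p+r)$ and $-1$ with probability $1-q$, so writing $B$ for the number of $+1$'s we have $Y=2B-Z$ and $B\mid Z=z\sim\mathrm{Bin}(z,q)$; on $\{Z\le 2t(p+r)\}$, a Chernoff bound on $B$ controls the conditional tail by $\exp(-\Omega(\delta^2/z))\le\exp(-\Omega(\delta^2/(t(p+r))))$, producing the second summand. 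The bookkeeping subtlety is that $\E{Y\mid Z=z}=z(p-r)/(p+r)$ differs from $\E{Y}=t(p-r)$ by $(Z-t(p+r))(p-r)/(p+r)$, so $\delta$ must be split into a piece absorbing the typical $O(\sqrt{t(p+r)})$ fluctuation of $Z$ (already handled by the first-stage Chernoff) and a piece driving the conditional Chernoff on $B$; this mean-shift accounting is the only real obstacle, and it is precisely what the Bernstein route sidesteps, which is why I would present the Bernstein proof as the main argument.
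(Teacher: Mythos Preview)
Your Bernstein argument is correct and in fact yields the single-term bound $\exp(-\Omega(\delta^2/(t(p+r))))$ on the whole range $0\le\delta\le t(p+r)$, which trivially implies the two-term statement. The paper does \emph{not} use Bernstein; it takes exactly your ``alternative'' two-stage route: with $X=|\{i:Y_i\ne 0\}|$ and $Z$ the sum of the nonzero signs (so $Z=Y$), it bounds $P(Y\ge \E{Y}+\delta)\le P(X\ge 2\E{X})+P(Z\ge \E{Z}+\delta\mid X<2\E{X})$, applies a multiplicative Chernoff bound to the first term, and a Hoeffding bound (with at most $2\E{X}=2t(p+r)$ summands in $[-1,1]$) to the second. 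The mean-shift subtlety you flag---that $\E{Z\mid X=x}=x(p-r)/(p+r)$ rather than $t(p-r)$---is not addressed in the paper's write-up; Hoeffding is simply centred at the unconditional $\E{Z}$. Your Bernstein route is therefore both genuinely different and cleaner: it sidesteps the conditioning and its attendant bookkeeping entirely and delivers a tighter one-term bound, at the modest cost of invoking a slightly heavier off-the-shelf inequality.
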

 \begin{proof}
We imagine $Y_i$ to be drawn in a two-step process: in a first draw with probability $1-p-r$ we set $Y_i = 0$. Otherwise, we have $Y_i \neq 0$ and a second random experiment determines whether $Y_i = 1$ or $Y_i = -1$.
Define indicator variables $X_i \in \{0, 1\}$ for the first experiment: $X_i =1$ if $Y_i \neq 0$. Then $X = \sum_{i=1}^t X_i$
gives the number of events where $Y_i\ne 0$. Furthermore, let $Z_j \in \{-1, +1\}$ be the outcome of the $j$-th instance of the second-type experiment (such an experiment only happens when the first draw determined $Y_i \neq 0$), and $Z=\sum_{j=1}^{X}Z_j$ be the sum of these variables. Since $Z$, in comparison to~$Y$, excludes all summands of value~0, we have $Z = Y$ and hence $\E{Z} = \E{Y} = t(p-r)$.

Is easy to see that \mbox{$(X< 2\E{X})\wedge (Z< \E{Z}+\delta \mid X< 2\E{X})\Rightarrow(Y < \E{Y}+\delta)$} therefore 
\begin{align*}
 P(Y\ge \E{Y}+\delta) \le&\; P ( X\ge 2\E{X}) + P(Z\ge \E{Z}+\delta \mid X< 2\E{X})\\
 \intertext{Now we apply a Chernoff bound to $X$ and a Hoeffding bound to $Z$ for $X \le 2\E{X}$ variables:}
 \le&\; e^{-\frac{4}{3}\E{X}} + e^{-\frac{\delta^2}{4\E{X}}}\\
 =&\; e^{-\Omega(\E{X})} + e^{-\Omega \left(\frac{\delta^2}{\E{X}}\right)}\\
 =&\; e^{-\Omega(t(p+r))} + e^{-\Omega \left(\frac{\delta^2}{t(p+r)}\right)}.\qedhere
\end{align*}
\end{proof}

\end{document}